\definecolor{lightgray}{RGB}{242, 242, 242}
\declaretheorem[numberwithin=section]{assumption}
\declaretheorem[]{challenged assumption}
\declaretheorem[]{proposition}
\DeclareRobustCommand{\eg}{e.g.,\@\xspace}
\DeclareRobustCommand{\ie}{i.e.,\@\xspace}
\DeclareRobustCommand{\wrt}{w.r.t.\@\xspace}
\newcommand{\X}{\mathcal{X}}
\newcommand{\Aspace}{\mathcal{A}}
\newcommand{\Sspace}{\mathcal{S}}
\newcommand{\Hspace}{\mathcal{H}}
\DeclareMathOperator*{\EV}{\mathbb{E}}
\DeclareMathOperator*{\Var}{\mathbb{V}ar}
\newcommand{\Reals}{\mathbb{R}}
\newcommand{\mdp}{\mathcal{M}}
\newcommand{\one}{\mathds{1}}
\DeclareMathOperator*{\argmax}{arg\,max}
\DeclareMathOperator{\var}{VaR_{\alpha}}
\DeclareMathOperator{\cvar}{CVaR_{\alpha}}
\newcommand{\F}{\mathcal{F}}
\newcommand{\J}{\mathcal{J}}
\newcommand{\cmdp}{\mathcal{CM}}
\newcommand{\cmark}{\ding{51}}%
\newcommand{\xmark}{\ding{55}}%
\newcommand{\emdp}{\mdp_\ell}
\newcommand{\eAspace}{\mathcal{A}_\ell}
\newcommand{\eSspace}{\mathcal{S}_\ell}
\newcommand{\eP}{P_\ell}
\newcommand{\emu}{\mu_\ell}
\newcommand{\er}{r_\ell}
\newcommand{\es}{s_\ell}
\newcommand{\pomdp}{\mathcal{P}\mdp}
\newcommand{\R}{\mathcal{R}}
\DeclareMathOperator{\w}{\mathbf{w}}
\newcommand{\dw}{d_\mathbf{w}}
\title{Challenging Common Assumptions in Convex Reinforcement Learning}
\author{%
  Mirco Mutti\thanks{Equal contribution} \\
  Politecnico di Milano \\
  Universit\`a di Bologna \\
  \texttt{mirco.mutti@polimi.it} \\
  \And
   Riccardo De Santi\textsuperscript{$*$} \\
   ETH Zurich \\
   \texttt{rdesanti@ethz.ch} \\
  \And
   Piersilvio De Bartolomeis\\
   ETH Zurich \\
   \texttt{pdebartol@ethz.ch} \\
   \And
   Marcello Restelli \\
   Politecnico di Milano \\
   \texttt{marcello.restelli@polimi.it} \\
}
\begin{document}

\maketitle

\begin{abstract}
The classic Reinforcement Learning (RL) formulation concerns the maximization of a \emph{scalar} reward function.
More recently, \emph{convex} RL has been introduced to extend the RL formulation to all the objectives that are convex functions of the state distribution induced by a policy. Notably, convex RL covers several relevant applications that do not fall into the scalar formulation, including imitation learning, risk-averse RL, and pure exploration. 
In classic RL, it is common to optimize an \emph{infinite trials} objective, which accounts for the state distribution instead of the empirical state visitation frequencies, even though the actual number of trajectories is always finite in practice. This is theoretically sound since the infinite trials and finite trials objectives are equivalent and thus lead to the same optimal policy. 
In this paper, we show that this hidden assumption does not hold in convex RL. In particular, we prove that erroneously optimizing the infinite trials objective in place of the actual \emph{finite trials} one, as it is usually done, can lead to a significant approximation error. Since the finite trials setting is the default in both simulated and real-world RL, we believe shedding light on this issue will lead to better approaches and methodologies for convex RL, impacting relevant research areas such as imitation learning, risk-averse RL, and pure exploration among others.  
\end{abstract}

\section{Introduction}
Standard Reinforcement Learning (RL)~\cite{sutton2018reinforcement} is concerned with sequential decision-making problems in which the utility can be expressed through a linear combination of scalar reward terms. The coefficients of this linear combination are given by the state visitation distribution induced by the agent's policy. Thus, the objective function can be equivalently written as the inner product between the mentioned state distribution and a reward vector.
However, not all the relevant objectives can be encoded through this linear representation~\cite{abel2021expressivity}. Several works have thus extended the standard RL formulation to address non-linear objectives of practical interest.
These include imitation learning~\cite{hussein2017imitation, osa2018algorithmic}, or the problem of finding a policy that minimizes the distance between the induced state distribution and the state distribution provided by experts' interactions~\cite{abbeel2004apprenticeship, ho2016generative, kostrikov2019imitation, state_marginal, ghasemipour2020divergence, dadashi2020primal}, risk-averse RL~\cite{garcia2015comprehensive}, in which the objective is sensitive to the tail behavior of the agent's policy~\citep{tamar2013variance, prashanth2013variance, tamar2015policy, chow2015risk, chow2017risk, bisi2020risk, zhang2021mean}, pure exploration~\cite{hazan2019maxent}, where the goal is to find a policy that maximizes the entropy of the induced state distribution~\cite{tarbouriech2019active, state_marginal, mutti2020intrinsically, mutti2021policy, zhang2021exploration, guo2021geometric, liu2021behavior, seo2021state, yarats2021reinforcement, mutti2022unsupervised, mutti2022importance}, diverse skills discovery~\cite{skill_discovery, eysenbach2018diversity, hansen2019fast, sharma2020dynamics, campos2020explore, liu2021aps, he2022wasserstein, zahavy2022discovering}, constrained RL~\cite{altman1999constrained, achiam2017constrained, brantley2020constrained, constrained_mdp, qin2021density, yu2021provably, bai2022achieving}, and others.
All this large body of work has been recently unified into a unique framework, called \emph{convex} RL~\cite{zhang2020variational, zahavy2021reward, geist2021concave}, which admits as an objective any convex function of the state distribution induced by the agent's policy. The convex RL problem has been showed to be largely tractable either computationally, as it admits a dual formulation akin to standard RL~\cite{puterman2014markov}, or statistically, as principled algorithms achieving sub-linear regret rates that are slightly worse than standard RL have been developed~\cite{zhang2020variational, zahavy2021reward}.

\begin{wrapfigure}{r}{0.52\textwidth}
\vskip -1em
\centering
	\includegraphics[scale=0.4]{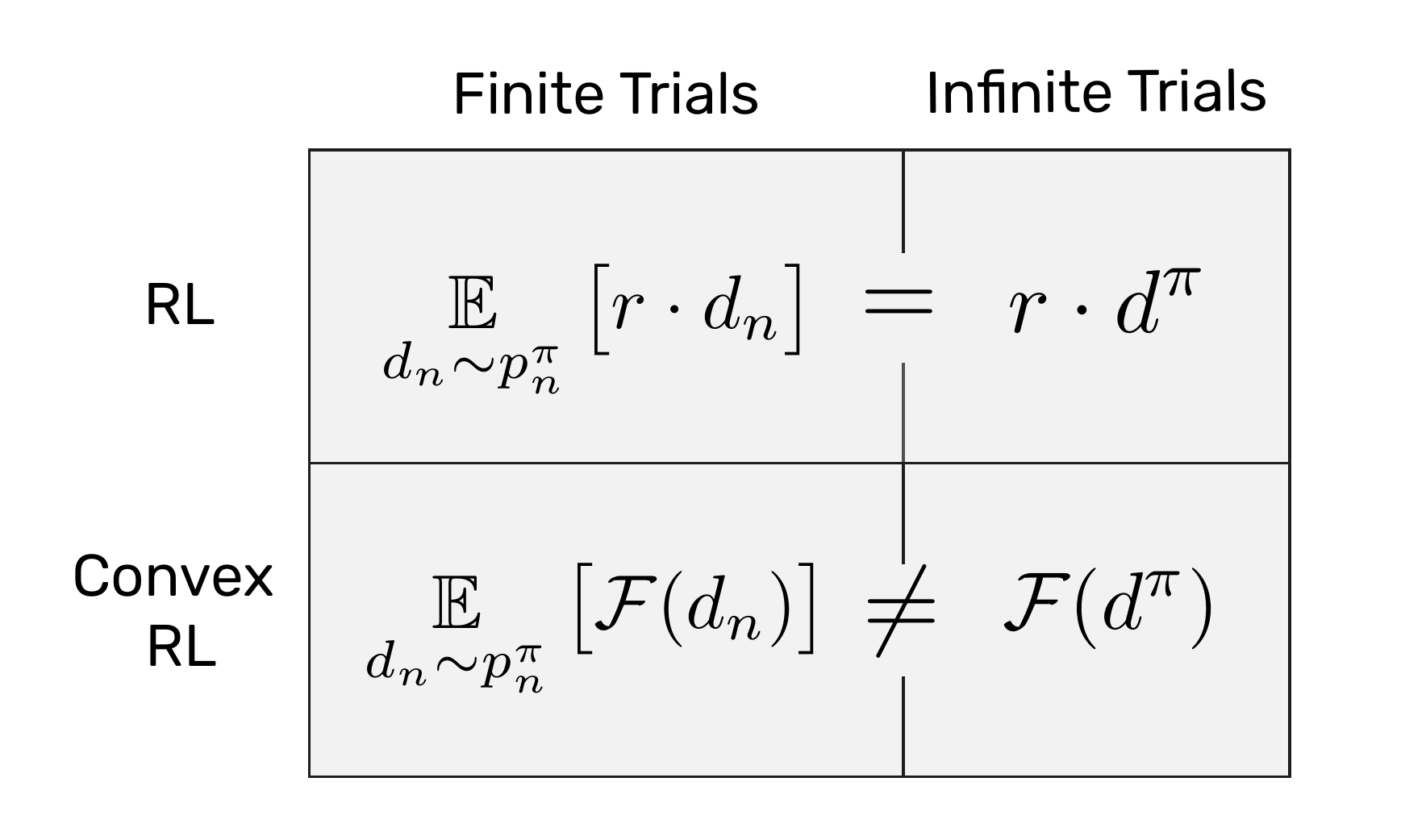}
\vskip -1em
\caption{Summary of the main finding of this paper: the equivalence between finite and infinite trials objectives does not hold for the convex RL formulation.}
\vskip -1em
\label{figure2}
\end{wrapfigure}
However, we note that the usual convex RL formulation makes an implicit \emph{infinite trials} assumption which is rarely met in practice. Indeed, the objective is written as a function of the state distribution, which is an expectation over the empirical state distributions that are actually obtained by running the policy in a given episode. In practice, we always run our policy for a finite number of episodes (or \emph{trials}), which in general prevents the empirical state distribution from converging to its expectation. This has never been a problem in standard RL: due to the scalar objective, optimizing the policy over infinite trials or finite trials is equivalent, as it leads to the same optimal policy. Crucially, in this paper, we show that this property does not hold for the convex RL formulation: a policy optimized over infinite trials can be significantly sub-optimal when deployed over finite trials (Figure~\ref{figure2}). In light of this observation, we reformulate the convex RL problem from a \emph{finite trials} perspective, developing insights that can be used to partially rethink the way convex objectives have been previously addressed in RL, with potential ripple effects to research areas of significant interest, such as imitation learning, risk-averse RL, pure exploration, and others.

In this paper, we formalize the notion of a finite trials RL problem, in which the objective is a function of the empirical state distribution induced by the agent's policy over $n$ trials rather than its expectation over infinite trials. As an illustrative example, consider a financial application, in which we aim to optimize a trading strategy. In the real world, we can only deploy the strategy over a single trial. Thus, we are only interested in the performance of the strategy in the real-world realization, rather than the performance of the strategy when averaging different realizations. Similar considerations apply to other relevant real-world applications, such as autonomous driving or treatment optimization in a healthcare domain. Following this intuition, we first define the (linear) finite trial RL formulation (Section~\ref{sec:rl}), for which it is trivial to prove the equivalence with standard RL. In Section~\ref{sec:crl}, we provide the finite trial convex RL formulation, for which we prove an upper bound on the approximation error made by optimizing the infinite trials as a proxy of the finite trials objective. In light of this finding, we challenge the hidden assumption that \textbf{(1)} \emph{convex RL can be equivalently addressed with an infinite trials formulation}, even if the setting is finite-trial.
We corroborate this result with an additional numerical analysis showing that the approximation bound is non-vacuous for relevant applications (Section~\ref{sec:validation}). Finally, in Section~\ref{sec:single_trial} we include an in-depth analysis of the \emph{single trial} convex RL, which suggests that other common assumptions in the convex RL literature, \ie that \textbf{(2)} \emph{the problem is always computationally tractable} and that \textbf{(3)} \emph{stationary policies are in general sufficient}, should be reconsidered as well. The proofs of the reported results can be found in the Appendix.

\section{Preliminaries}
\label{sec:preliminaries}
In this section, we report the notation and background useful to understand the paper.
We denote with $[T]$ a set of integers $\{ 1, \ldots, T \}$, and with a lower case letter $a$ a scalar or a vector, according to the context. For two vectors $a, b \in \Reals^d$, we denote with $a \cdot b = \sum_{i = 1}^d a_i b_i$ the inner product between $a, b$.

\subsection{Probabilities and Percentiles}
Let $\X$ denote a measurable space, we will denote with $\Delta(\X)$ the probability simplex over $\X$, and with $p \in \Delta(\X)$ a probability measure over $\X$. For two probability measures $p, q$ over $\X$, we define their $\ell^p$-distance as
$
    \| p - q \|_p := \big( \sum_{x \in \X} \big| p (x) - q(x) \big|^p \big)^{\nicefrac{1}{p}},
$
and their Kullback-Leibler (KL) divergence as
$
    \operatorname{KL} (p || q) := \sum_{x \in \X} p(x) \log \big( p(x) / q(x) \big).
$
Let $X$ be a random variable distributed according to $p$, having a cumulative density function $F_X (x) = Pr(X \leq x)$. We denote with $\EV [X]$ its expected value, and its $\alpha$-percentile is denoted as
$
    \var (X) = \inf \ \big\{ x \ | \ F_X (x) \geq \alpha \big\} = F^{-1}_X (\alpha),
$
where $\alpha \in (0, 1)$ is a confidence level, and $\var$ stands for Value at Risk (VaR) at level $\alpha$. We denote the expected value of $X$ within its $\alpha$-percentile as
$
    \cvar (X) = \EV \big[ X \ | \ X \leq \var (X) \big],
$
where $\cvar$ stands for Conditional Value at Risk (CVaR) at level $\alpha$.

\subsection{Markov Decision Processes}

A tabular Markov Decision Process (MDP)~\cite{puterman2014markov} is defined as $\mdp := (\Sspace, \Aspace, P, T, \mu, r)$, where $\Sspace$ is a state space of size $S$, $\Aspace$ is an action space of size $A$, $P$ is a Markovian transition model $P: \Sspace \times \Aspace \to \Delta (\Sspace)$, such that $P(s' | s, a)$ denotes the conditional probability of the next state $s'$ given the current state $s$ and action $a$, $T$ is the episode horizon, $\mu$ is an initial state distribution $\mu : \Delta (\Sspace)$, and $r$ is a scalar reward function $r: \Sspace \to \Reals$, such that $r (s)$ is the reward collected in the state $s$.

In the typical interaction episode, an agent first observes the initial state $s_0 \sim \mu$ of the MDP. Then, the agent select an action $a_0$, so that the MDP transitions to the next state $s_1 \sim P(\cdot | s_0, a_0)$, and the agent collects the reward $r(s_1)$. Having observed $s_1$, the agent then selects an action $a_1$ triggering a subsequent transition to $s_2 \sim P(\cdot | s_1, a_1)$. This process carries on repeatedly until the episode ends.

A policy $\pi$ defines the behavior of an agent interacting with an MDP, \ie the strategy for which an action is selected at any step of the episode. It consists of a sequence of decision rules $(\pi_t)_{t = 0}^{\infty}$ that maps the current trajectory\footnote{We will call a sequence of states and actions a \emph{trajectory} or a \emph{history} indifferently.} $h_t = (s_i, a_i)_{i = 0}^{t - 1} \in \Hspace_t$ with a distribution over actions $\pi_t : \Hspace_t \to \Delta(\Aspace)$, where $\Hspace_t$ denotes the set of trajectories of length $t$. A non-stationary policy is a sequence of decision rules $\pi_t : \Sspace \to \Delta(\Aspace)$. A stationary (Markovian) policy is a time-consistent decision rule $\pi : \Sspace \to \Delta(A)$, such that $\pi (a | s)$ denotes the conditional probability of taking action $a$ in state $s$.

A trajectory $h$, obtained from an interaction episode, induces an empirical distribution $d$ over the states of the MDP $\mdp$, such that $d (s) = \frac{1}{|h|} \sum_{s_t \in h} \one (s_t = s)$. We denote with $p^\pi$ the probability of drawing $d$ by following the policy $\pi$. For $n \in \mathbb{N}$, we denote with $d_n$ the empirical distribution $d_n (s) = \frac{1}{n} \sum_{i = 1}^n d_i (s)$, and with $p^\pi_n$ the probability of drawing $d_n$ by following the policy $\pi$ for $n$ episodes. Finally, we call the expectation $d^\pi = \EV_{d \sim p^\pi} [ d ]$ the state distribution induced by $\pi$.

\section{Reinforcement Learning in Finite Trials}
\label{sec:rl}
In the standard RL formulation~\cite{sutton2018reinforcement}, an agent aims to learn an optimal policy by interacting with an unknown MDP $\mdp$. An optimal policy is a decision strategy that maximizes the expected sum of rewards collected during an episode. Especially, we can represent the value of a policy $\pi$ through the value function
$
    V^\pi_t (s) := \EV_{\pi} \big[ \sum\nolimits_{t' = t}^{T} r(s_{t'}) \ \big| \ s_t = s \big].
$
The value function allows us to write the RL objective as $\max_{\pi \in \Pi} \EV_{s_1 \sim \mu} [ V_1^\pi (s_1)]$, where $\Pi$ is the set of all the stationary policies. Equivalently, we can rewrite the RL objective into its dual formulation~\cite{puterman2014markov}, \ie
\begin{tcolorbox}[colframe=white!, top=0pt,left=2pt,right=2pt,bottom=0pt]
\center \textbf{RL}
\begin{equation}
    \label{rl_objective}
     \max_{\pi \in \Pi} \;\; \Big( r \cdot d^\pi \Big) =: \J_{\infty} (\pi)
\end{equation}
\end{tcolorbox}
where we denote with $r \in \Reals^S$ a reward vector, and with $d^\pi$ the state distribution induced by $\pi$.
We call the problem~\eqref{rl_objective} the \emph{infinite trials} RL formulation. Indeed, the objective $\J_\infty (\pi)$ considers the sum of the rewards collected during an episode, \ie $r \cdot d^\pi$, that we can achieve on the average of an infinite number of episodes drawn with $\pi$. This is due to the state distribution $d^\pi$ being an expectation of empirical distributions $d^\pi = \EV_{d \sim p^\pi}[d]$.
However, in practice, we can never draw infinitely many episodes following a policy $\pi$. Instead, we draw a small batch of episodes $d_n \sim p^\pi_n$. Thus, we can instead conceive a \emph{finite trials} RL formulation that is closer to what is optimized in practice.
\begin{tcolorbox}[colframe=white!, top=0pt,left=2pt,right=2pt,bottom=0pt]
\center \textbf{Finite Trials RL}
\begin{equation}
    \label{finite_trial_rl_objective}
     \max_{\pi \in \Pi} \;\; \Big( \EV_{d_n \sim p^\pi_n} \big[ r \cdot d_n \big] \Big) =: \J_n (\pi)
\end{equation}
\end{tcolorbox}
One could then wonder whether optimizing the finite trials objective~\eqref{finite_trial_rl_objective} leads to results that differ from the infinite trials one~\eqref{rl_objective}. To this point, it is straightforward to see that the two objective functions are actually equivalent
\begin{equation*}
    \J_n (\pi) = \EV_{d_n \sim p^\pi_n} \big[ r \cdot d_n \big] = r \cdot \EV_{d_n \sim p^\pi_n} \big[ d_n \big] = r \cdot d^\pi = \J_\infty (\pi),
\end{equation*}
since  $r$ is a constant vector and the expectation is a linear operator. It follows that the infinite trials and the finite trials RL formulations admit the same optimal policies. Hence, one can enjoy both the computational tractability of the infinite trials formulation and, at the same time, optimize the objective function that is employed in practice. In the next section, we will show that a similar result does not hold true for the convex RL formulation.

\section{Convex Reinforcement Learning in Finite Trials}
\label{sec:crl}
Even though the RL formulation covers a wide range of sequential decision-making problems, several relevant applications cannot be expressed by means of the inner product between a linear reward vector $r$ and a state distribution $d^\pi$~\cite{abel2021expressivity, silver2021reward}. These include imitation learning, pure exploration, constrained problems, and risk-sensitive objectives, among others. Recently, a \emph{convex} RL formulation~\cite{zhang2020variational, zahavy2021reward, geist2021concave} has been proposed to unify these applications in a unique general framework, which is
\begin{tcolorbox}[colframe=white!, top=0pt,left=2pt,right=2pt,bottom=0pt]
\center \textbf{Convex RL}
\begin{equation}
    \label{convex_rl_objective}
    \max_{\pi \in \Pi} \;\; \Big( \F (d^\pi) \Big) =: \zeta_\infty(\pi) 
\end{equation}
\end{tcolorbox}
where $\mathcal{F}: \Delta{(S)} \to \mathbb{R}$ is a function\footnote{In this context, we use the term \emph{convex} to distinguish it from the standard \emph{linear} RL objective. However, in the following we will consider functions $\F$ that are either convex, concave or even non-convex. In general, problem~\eqref{convex_rl_objective} takes the form of a max problem for concave $\F$, or a min problem for convex $\F$.} of the state distribution $d^\pi$.
In Table~\ref{table:convexmdp}, we recap some of the most relevant problems that fall under the convex RL formulation, along with their specific $\F$ function.
\begin{table*}[t!]
\setlength{\tabcolsep}{4pt}
\caption{Various convex RL objectives and applications. The last column states the equivalence between infinite trials and finite trials settings, as derived in Proposition~\ref{prop:relationship_CRL_STCLR} (Appendix).}
\label{table:convexmdp}
\begin{sc}
\resizebox{\textwidth}{!}{%
\begin{tabular}{c c c c}
\toprule
Objective $\mathcal{F}$ & & Application & Infinite $\equiv$ Finite \\
\midrule
$r \cdot d$ & $r \in \Reals^S, d \in \Delta (\Sspace)$ &  { \small  RL} & \cmark \\
\rowcolor{lightgray} $ \left\|d - d_{E} \right\|_{p}^{p} \ $ &  &  &  \\
\rowcolor{lightgray} $ \operatorname{KL} (d || d_E)$ & \multirow{-2}{*}{ $ d, d_E \in \Delta (\Sspace)$ }  & \multirow{-2}{*}{\small Imitation Learning }  & \multirow{-2}{*}{\xmark}  \\
$- d \cdot \log \left( d \right)$ & $d \in \Delta (\Sspace)$ & {\small Pure Exploration} & \xmark \\
\rowcolor{lightgray} $ \cvar [ r \cdot d ] \ $ &  &  &  \\
\rowcolor{lightgray} $ r \cdot d - \Var [r \cdot d]$ & \multirow{-2}{*}{$ r \in \Reals^S, d \in \Delta (\Sspace)$} & \multirow{-2}{*}{\small Risk-Averse RL} & \multirow{-2}{*}{\xmark} \\
$r \cdot d, $ s.t. $\lambda \cdot d \leq c$ & $r, \lambda \in \Reals^S, c \in \Reals, d \in \Delta (\Sspace)$ & {\small Linearly Constrained RL } & \cmark \\
\rowcolor{lightgray} $ -\EV_{z} \operatorname{KL} \left( d_{z} || \EV_{k} d_k \right) $ & $ z \in \Reals^d, d_z, d_k \in \Delta (\Sspace)$  & {\small Diverse Skill Discovery } & \xmark \\
\bottomrule
\end{tabular}
}
\end{sc}
\end{table*}
As it happens for linear RL, in any practical simulated or real-world scenario, we can only draw finite number of episodes with a policy $\pi$. From these episodes, we obtain an empirical distribution $d_n \sim p^{\pi}_n$ rather than the actual state distribution $d^\pi$, where $n$ is the number of episodes. This can cause a mismatch from the objective that is typically considered in convex RL~\cite{zhang2020variational, zahavy2021reward}, and what can be optimized in practice. To overcome this mismatch, we define a finite trials formulation of the convex RL objective, as we did in the previous section for the linear RL formulation.
\begin{tcolorbox}[colframe=white!, top=0pt,left=2pt,right=2pt,bottom=0pt]
\center \textbf{Finite Trials Convex RL}
\begin{equation}
\label{finite_trial_convex_rl_objective}
    \max_{\pi \in \Pi} \;\; \Big( \EV_{d_n \sim p_n^\pi} \big[ \F(d_n) \big] \Big) =: \zeta_n(\pi)
\end{equation}
\end{tcolorbox}
Comparing objectives~\eqref{convex_rl_objective} and~\eqref{finite_trial_convex_rl_objective}, one can notice that both of them include an expectation over the episodes, being $d^\pi = \EV_{d \sim p^\pi} [d]$. Especially, we can write
\begin{equation*}
\zeta_\infty (\pi) = \F (d^\pi) = \F (\EV_{d_n \sim p^\pi_n}[ d_n]) \leq \EV_{d_n \sim p^\pi_n}[ \F(d_n) ] = \zeta_n (\pi)
\end{equation*}
through the Jensen's inequality. 
As a consequence, optimizing the infinite trials objective $\zeta_\infty(\pi)$ does not necessarily lead to an optimal behavior for the finite trials objective $\zeta_n (\pi)$. From a mathematical perspective, this is due to the fact that the empirical distributions $d_n$ induced by the policy $\pi$ are averaged by the expectation $d^\pi$ before computing the $\F$ function into objective \eqref{convex_rl_objective}, thus losing a measure of the performance $\F$ for each batch of episodes, which we instead keep in the objective~\eqref{finite_trial_convex_rl_objective}.

\subsection{Approximating the Finite Trials Objective with Infinite Trials}
Despite the evident mismatch between the finite trials and the infinite trials formulation of the convex RL problem, most existing works consider~\eqref{convex_rl_objective} as the standard objective, even if only a finite number of episodes can be drawn in practice. Thus, it is worth investigating how much we can lose by approximating a finite trials objective with an infinite trials one. First, we report a useful assumption on the structure of the function $\F$.
\begin{assumption}[Lipschitz]
    \label{ass:Lipschitz}
    A function $\mathcal{F} : \X \to \mathbb{R}$ is Lipschitz-continuous if it holds for some constant $L$
    \begin{equation*}
        \big| \mathcal{F} (x) - \mathcal{F} (y) \big| \leq L \big\| x - y \big\|_1, \qquad \forall (x, y) \in \X^2.
    \end{equation*}
\end{assumption}
Then, we provide an upper bound on the approximation error in the following theorem.
\begin{tcolorbox}[colframe=white!, top=0pt,left=2pt,right=2pt,bottom=0pt]
\label{thm:approximation_error}
\begin{restatable}[Approximation Error]{theorem}{approximationError}
    Let $n \in \mathbb{N}$ be a number of trials, let $\delta \in (0, 1]$ be a confidence level, let $\pi^\dagger \in \argmax_{\pi \in \Pi} \zeta_n (\pi)$ and $\pi^\star \in \argmax_{\pi \in \Pi} \zeta_\infty (\pi)$. Then, it holds with probability at least $1 - \delta$
    \begin{equation*}
        err := \big| \zeta_n (\pi^\dagger) - \zeta_n (\pi^\star) \big| \leq 4 L T \sqrt{ \frac{2 S \log ( 4T / \delta)}{n}}
    \end{equation*}
\end{restatable}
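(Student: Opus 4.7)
The plan is to bound $err$ by a uniform gap between $\zeta_n$ and $\zeta_\infty$ over all policies, and then reduce that gap to a concentration statement about empirical state distributions.

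I would begin with the standard add-and-subtract decomposition. By optimality of $\pi^\dagger$ for $\zeta_n$ one has $\zeta_n(\pi^\dagger)\geq \zeta_n(\pi^\star)$, so the absolute value can be dropped. Writing
\begin{equation*}
\zeta_n(\pi^\dagger) - \zeta_n(\pi^\star) = \bigl[\zeta_n(\pi^\dagger)-\zeta_\infty(\pi^\dagger)\bigr] + \bigl[\zeta_\infty(\pi^\dagger)-\zeta_\infty(\pi^\star)\bigr] + \bigl[\zeta_\infty(\pi^\star)-\zeta_n(\pi^\star)\bigr],
\end{equation*}
the middle bracket is non-positive (since $\pi^\star$ maximizes $\zeta_\infty$), leaving $err \leq 2\sup_{\pi\in\Pi}|\zeta_n(\pi)-\zeta_\infty(\pi)|$.

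Next, for a fixed $\pi$, I would exploit the identity $d^\pi = \EV_{d_n\sim p^\pi_n}[d_n]$ together with Assumption~\ref{ass:Lipschitz}. Bringing $\F(d^\pi)$ inside the expectation and applying Jensen/triangle inequality gives
\begin{equation*}
|\zeta_n(\pi)-\zeta_\infty(\pi)| \;=\; \bigl|\EV_{d_n\sim p^\pi_n}[\F(d_n)-\F(d^\pi)]\bigr| \;\leq\; L\cdot\EV_{d_n\sim p^\pi_n}\bigl[\|d_n-d^\pi\|_1\bigr].
\end{equation*}
This reduces everything to controlling how close the averaged empirical distribution $d_n$ is to its expectation in $\ell_1$.

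For the concentration step I would decompose $d_n - d^\pi = \tfrac{1}{T}\sum_{t=1}^{T}(\hat p_t - p_t)$, where $\hat p_t(s)$ is the empirical frequency of the state visited at step $t$ across the $n$ independent trajectories and $p_t(s)=\Pr_\pi(s_t=s)$. The key observation is that although the $T$ states within one trajectory are correlated, the $n$ observations feeding each $\hat p_t$ are i.i.d., so a standard Hoeffding bound (coordinate-wise in $s$) controls $|\hat p_t(s)-p_t(s)|$. A union bound over the $S$ states and the $T$ time steps, followed by summing/averaging over $t$, yields a high-probability bound of the form $\|d_n - d^\pi\|_1 \leq C\sqrt{S\log(T/\delta)/n}$. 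Converting this high-probability bound into an expectation bound by splitting on the good event (using the trivial $\|d_n-d^\pi\|_1\leq 2$ off-event) and chaining with the previous two displays produces the claimed inequality.

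The main obstacle I expect is accounting for all the constants, in particular matching the multiplicative $T$ and the $\log(4T/\delta)$ in the stated bound: the cleanest per-step concentration only gives $T$ inside the logarithm (from the union bound) rather than as a multiplicative factor. Reproducing the exact form will likely require either bounding $|\F(d_n)-\F(d^\pi)|$ through a looser telescoping $L\sum_t\|\hat p_t-p_t\|_1$ (which introduces the explicit $T$) or paying a horizon factor when converting between the high-probability and in-expectation statements; either route also costs the factor of $4$ from combining the optimality decomposition ($\times 2$) with a two-sided concentration step ($\times 2$).
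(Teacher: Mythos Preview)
Your proposal matches the paper's proof almost step for step: the add-and-subtract plus optimality argument to get $err\leq 2\max_{\pi\in\{\pi^\dagger,\pi^\star\}}|\zeta_n(\pi)-\zeta_\infty(\pi)|$, the Lipschitz reduction to $L\,\EV[\|d_n-d^\pi\|_1]$, the per-step decomposition $d_n=\tfrac{1}{T}\sum_t d_{n,t}$, and concentration with a union bound over $t\in[T]$. The one place to adjust is the concentration tool: coordinate-wise Hoeffding with a union over the $S$ states yields an $\ell_1$ deviation of order $S\sqrt{\log(\cdot)/n}$ rather than $\sqrt{S\log(\cdot)/n}$; the paper instead invokes the $\ell_1$ empirical-distribution inequality of Weissman et al.\ (their Theorem~2.1), which is what delivers the $\sqrt{S}$ dependence, so swap that in at this step. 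Your suspicion about the multiplicative $T$ is well founded: the paper's own chain of inequalities bounds $\|d_n-d^\pi\|_1$ by $\max_t\|d_{n,t}-d^\pi_t\|_1$ and puts $T$ only inside the logarithm via the union bound, with the outer $T$ appearing only in the final assembled constant.
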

\end{tcolorbox}
The previous result establishes an approximation error rate $err = O(L T \sqrt{S / n})$ that is polynomial in the number of episodes $n$. Unsurprisingly, the guarantees over the approximation error scales with $O (1/\sqrt{n})$, as one can expect the empirical distribution $d_n$ to concentrate around its expected value for large $n$~\cite{weissman2003inequalities}. This implies that approximating the finite trials objective $\zeta_n (\pi)$ with the infinite trials $\zeta_\infty(\pi)$ can be particularly harmful in those settings in which $n$ is necessarily small. As an example, consider training a robot through a simulator and deploying the obtained policy in the real world, where the performance measures are often based on a single episode ($n = 1$). The performance that we experience from the deployment can be much lower than the expected $\zeta_\infty(\pi)$, which might result in undesirable or unsafe behaviors.
However, Theorem~\ref{thm:approximation_error} only reports an instance-agnostic upper bound, and it does not necessarily imply that there would be a significant approximation error in a specific instance, \ie a specific pairing of an MDP $\mdp$ and a function $\F$. Nevertheless, in this paper we argue that the upper bound of the approximation error is not vacuous in several relevant applications, and we provide an illustrative numerical corroboration of this claim in Section~\ref{sec:validation}.

\begin{challenged assumption}
    The convex RL problem can be equivalently addressed with an infinite trials formulation.
\end{challenged assumption}
\begin{figure}[t]
\centering
\includegraphics[width=1\textwidth]{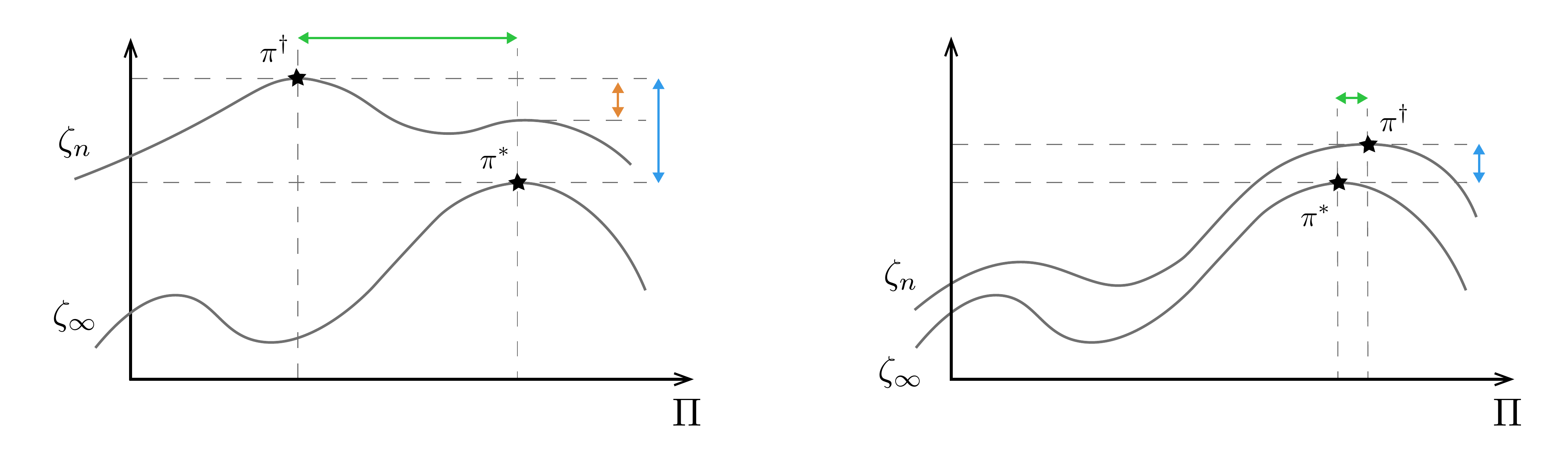}
\caption{The two illustrations report an abstract visualization of $\zeta_n$ and $\zeta_\infty$ for small values of $n$ (left) and large values of $n$ (right) respectively. The green bar visualize the distance $\big\| d_n - d^{\pi^\star} \big\|_1$, in which $d_n \sim p^{\pi^\dagger}_n$. The blue bar visualize the distance $\big| \zeta_n(\pi^\dagger) - \zeta_\infty(\pi^\star) \big|$. The orange bar visualize the approximation error, \ie the distance $\big| \zeta_n(\pi^\dagger) - \zeta_n(\pi^\star) \big|$.}
\label{fig:apx_error}
\end{figure}
Finally, in Figure \ref{fig:apx_error} we report a visual representation\footnote{Note that it is not possible to represent the objective functions in two dimensions in general. Nevertheless, we provide an abstract one-dimensional representation of the policy space to bring the intuition.} of the approximation error defined in Theorem~\ref{thm:approximation_error}. Notice that the finite trials objective $\zeta_n$ converges uniformly to the infinite trials objective $\zeta_\infty$ as a trivial consequence of Theorem~\ref{thm:approximation_error}. This is particularly interesting as it results in $\pi^\dagger$ converging to $\pi^\star$ in the limit of large $n$ as shown Figure~\ref{fig:apx_error}.

\section{In-Depth Analysis of Single Trial Convex Reinforcement Learning Setting}
\label{sec:single_trial}
Having established a significant mismatch between the infinite trials convex RL setting that is usually considered in previous works, \ie $\zeta_\infty (\pi)$, and the finite trials formulation that is actually targeted in practice, \ie $\zeta_n (\pi)$, it is now worth taking a closer look at the finite trials optimization problem~\eqref{finite_trial_convex_rl_objective}. Indeed, to avoid the approximation error that can occur by optimizing~\eqref{finite_trial_convex_rl_objective} through the infinite trials formulation (Theorem~\ref{thm:approximation_error}), one could instead directly address the optimization of~\eqref{finite_trial_convex_rl_objective}. Especially, how does the finite trials convex RL problem compare to its infinite trials formulation and the linear RL problem? What kind of policies do we need to optimize the finite trials objective? Is the underlying learning process statistically harder than infinite trials convex RL?
In this section, we investigate the answers to these relevant questions. To this purpose, we will focus on a \emph{single trial} setting, \ie $\zeta_n (\pi)$ with $n = 1$, which allows for a clearer analysis, while analogous considerations should extend to a general number of trials $n > 1$.
\begin{tcolorbox}[colframe=white!, top=0pt,left=2pt,right=2pt,bottom=0pt]
\center \textbf{Single Trial Convex RL}
\begin{equation}
\label{single_trial_convex_rl_objective}
    \max_{\pi \in \Pi} \;\; \Big( \EV_{d \sim p^\pi} \big[ \F(d) \big] \Big) =: \zeta_1 (\pi)
\end{equation}
\end{tcolorbox}
Taking inspiration from~\cite{zahavy2021reward}, we can cast the problem~\eqref{single_trial_convex_rl_objective} defined over an MDP $\mdp$ into a \emph{convex} MDP $\cmdp := (\Sspace, \Aspace, P, T, \mu, \F)$, where $\Sspace, \Aspace, P, T, \mu$ are defined as in a standard MDP (see Section~\ref{sec:preliminaries}), and $\F : \Delta(\Sspace) \to \Reals$ is a convex function that defines the objective $\zeta_1 (\pi)$.
Is solving a convex MDP $\cmdp$ significantly harder than solving an MDP $\mdp$?

\subsection{Extended MDP Formulation of the Single Trial Convex RL Setting}
\label{sec:extended_mdp}
We can show that any finite-horizon convex MDP $\cmdp$ can be actually translated into an equivalent MDP $\emdp = (\eSspace, \eAspace, \eP, \emu, \er)$, which we call an \emph{extended} MDP. 
The main idea is to temporally-extend $\cmdp$ so that each state contains the information of the full trajectory leading to it, so that the convex objective can be cast into a linear reward. 
To do this, we define the extended state space $\eSspace$ to be the set of all the possible histories up to length $T$, so that $\es \in \eSspace$ now represents a history.
Then, we can keep $\eAspace, \eP, \emu$ equivalent to $\Aspace, P, \mu$ of the original $\cmdp$, where for the extended transition model $\eP(\es' | \es, a)$ we solely consider the last state in the history $\es$ to define the conditional probability to the next history $\es'$.
Finally, we just need to define a scalar reward function $\er: \eSspace \to \Reals$ such that $\er (\es) = \F (d_{\es})$ for all the histories $\es$ of length $T$ and $\er (\es) = 0$ otherwise, where we denoted with $d_{\es}$ the empirical state distribution induced by $\es$.

Notably, the problem of finding an optimal policy for the extended MDP $\emdp$, \ie $\pi^* \in \argmax_{\pi \in \Pi}  \er \cdot d^\pi$, is equivalent to solve the problem~\eqref{single_trial_convex_rl_objective}. Indeed, we have
\begin{equation*}
    \er \cdot d^\pi = \sum_{\es \in \eSspace} \er (\es) d^\pi (\es) =  \sum_{\es \in \eSspace} \F (d_{\es}) \one (|\es| = T) p^\pi (d_{\es}) = \EV_{d_{\es} \sim p^\pi} [ \F (d_{\es}) ].
\end{equation*}
Whereas $\emdp$ can be solved with classical MDP methods~\cite{puterman2014markov}, the size of the policy $\pi : \eSspace \to \Delta (\eAspace)$ to be learned does scale with the size of $\emdp$, which grows exponentially in the episode horizon as we have $|\eSspace| > S^T$. Thus, the extended MDP formulation and the resulting insight cast some doubts on the notion that the convex RL is not significantly harder than standard RL~\cite{zhang2020variational, zahavy2021reward}.
\begin{challenged assumption}
    Convex RL is only slightly harder than the standard RL formulation.
\end{challenged assumption}

\subsection{Partially Observable MDP Formulation of the Single Trial Convex RL Setting}
Instead of temporally extending the convex MDP $\cmdp$ as in the previous section, which causes the policy space to grow exponentially with the episode horizon $T$, we can alternatively formulate $\cmdp$ as a Partially Observable MDP (POMDP)~\cite{aastrom1965optimal,kaelbling1998planning} $\pomdp = (\eSspace, \eAspace, \eP, \emu, \er, \Omega, O)$, in which $\Omega$ denotes an observation space, and $O : \eSspace \to \Delta (\Omega)$ is an observation function. The process to build $\pomdp$ is rather similar to the one we employed for the extended MDP $\emdp$, and the components $\eSspace, \eAspace, \eP, \emu, \er$ remain indeed unchanged. However, in a POMDP the agent does not directly access a state $\es \in \eSspace$, but just a partial observation $o \in \Omega$ that is given by the observation function $O$. Here $O(\es) = o$ is a deterministic function such that the given observation $o$ is the last state in the history $\es$. Since the agent only observes $o$, a stationary policy can be defined as a function $\pi : \Omega \to \Delta(\Aspace)$, for which the size depends on the number of states $S$ of the convex MDP $\cmdp$, being $\Omega = \Sspace$. However, it is well known~\cite{kaelbling1998planning} that history-dependent policies should be considered for the problem of optimizing a POMDP. This is in sharp contrast with the current convex MDP literature, which only considers stationary policies due to the infinite trials formulation~\cite{zhang2020variational}.
\begin{challenged assumption}
    The set of stationary randomized policies is sufficient for convex RL.
\end{challenged assumption}

\subsection{Online Learning in Single Trial Convex RL}
Let us assume to have access to a planning oracle that returns an optimal policy $\pi^*$ for a given $\cmdp$, so that we can sidestep the concerns on the computational feasibility reported in previous sections. It is worth investigating the complexity of learning $\pi^*$ from \emph{online interactions} with an unknown $\cmdp$.
A typical measure of this complexity is the online learning regret $\R (N)$, which is defined as
\begin{equation*}
    \mathcal{R}(N):= \sum_{t=1}^N \;V^* - V^{(t)},
\end{equation*}
where $N$ is the number of learning episodes, $V^* = V^{\pi^*}_1 (s_1)$ is the value of the optimal policy, $V^{(t)} = V^{\pi_t}$ is the value of the policy $\pi_t$ deployed at the episode $t$. 
We now aim to assess whether there exists a principled algorithm that can achieve a sub-linear regret $\R(N)$ in the worst case. To this purpose, we can cast our learning problem in the Once-Per-Episode (OPE) RL formulation~\cite{chatterji2021theory}.
In the latter setting, the agent interacts with the MDP for $T$ steps, receiving a $0/1$ feedback at the end of the episode, where the feedback is computed according to a logistic model that is function of the history.
To translate our objective $\zeta_{1} (\pi) = \EV_{d \sim p^\pi} [\F(d)]$ into the OPE framework~\cite{chatterji2021theory}, we have to encode $\F$ into a linear representation. With the following assumption, we state the existence of such representation.
\begin{assumption}[Linear Realizability]
    The function $\F$ is linearly-realizable if it holds
    \begin{equation*}
        \F (d) = \w_*^\top \phi (d),
    \end{equation*}
    where $\w_* \in \Reals^{\dw}$ is a vector of parameters such that $\| \w_* \|_2 \leq B$ for some known $B > 0$, and $\phi (d) = (\phi_j (d))_{j = 1}^{\dw}$ is a known vector of basis functions such that $\| \phi (d) \|_2 \leq 1, \forall d \in \Delta (\Sspace)$.
    \label{ass:linearly_realizable}
\end{assumption}
With the Assumption~\ref{ass:linearly_realizable} and other minor changes that are detailed in the Appendix, we can invoke the analysis of OPE-UCBVI in~\cite{chatterji2021theory} to provide an upper bound to the regret $\R(N)$ in our setting.
\begin{tcolorbox}[colframe=white!, top=0pt,left=2pt,right=2pt,bottom=0pt]
\label{thm:regret_bound}
\begin{restatable}[Regret]{theorem}{regretBound}
    For any confidence $\delta \in (0,1]$ and unknown convex MDP $\cmdp$, the regret of the {\em OPE-UCBVI} algorithm is upper bounded as
    \begin{equation*}
        \R (N) \leq O \Big( \Big[ d^{7 / 2}_{\w} B^{3 / 2} T^2 S A^{1 / 2} \Big] \sqrt{N} \Big)
    \end{equation*}
    with probability $1-\delta$. 
\end{restatable}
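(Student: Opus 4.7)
The plan is to reduce the single trial convex RL problem to an instance of the Once-Per-Episode (OPE) RL framework of~\cite{chatterji2021theory} and then invoke, essentially as a black box, the regret analysis of OPE-UCBVI in that paper to read off the claimed bound.

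First, I would recast the convex MDP $\cmdp$ through the extended MDP construction of Section~\ref{sec:extended_mdp}, so that states encode histories, the empirical state distribution $d_{\es}$ is measurable from the terminal state $\es \in \eSspace$, and the only non-zero reward is the end-of-episode scalar $\F(d_{\es})$. Under Assumption~\ref{ass:linearly_realizable}, this signal equals $\w_*^\top \phi(d_{\es})$: the inner product between an unknown bounded parameter and a known bounded feature map of the trajectory. This puts the problem exactly in the form required by the OPE model, in which an agent receives at the end of each episode a feedback that is linear in a bounded feature representation of the history.

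Second, I would identify the problem parameters with those driving the OPE-UCBVI regret bound in~\cite{chatterji2021theory}: the feature dimension $\dw$, the parameter norm bound $B$, the horizon $T$, the underlying state cardinality $S$ (which, by keeping observations at the level of $\Sspace$ as in Section~5.2, does not explode with the history space), and the action cardinality $A$. The OPE-UCBVI theorem then yields a bound of the shape $O(d^{7/2} B^{3/2} H^2 S A^{1/2} \sqrt{N})$; substituting $d \leftarrow \dw$ and $H \leftarrow T$ gives exactly the stated $O(\dw^{7/2} B^{3/2} T^2 S A^{1/2} \sqrt{N})$.

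Third, I would verify that the ``minor changes'' referenced in the paper are sound and do not inflate the rate. These amount to (i) translating the Bernoulli/logistic feedback model of the OPE framework into our real-valued feedback $\F(d) \in [-B, B]$, either via an explicit Bernoulli randomization with mean a monotone transform of $\F(d)$ or by generalizing the concentration step of~\cite{chatterji2021theory} from a logistic link to a Lipschitz one; (ii) checking that $\phi(d_{\es})$ satisfies the boundedness and measurability conditions used to construct confidence sets on $\w_*$; and (iii) confirming that UCB-style optimism over both the transition model and $\w$ propagates validly when the end-of-episode signal is linear in $\phi$ rather than passed through a sigmoid.

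The main obstacle is the feedback-model mismatch underlying (i) and (iii): the original OPE-UCBVI analysis is calibrated to $0/1$ logistic observations, whereas we observe a real-valued $\F(d_{\es})$ at episode end. Adapting the self-normalized concentration and confidence-ball construction to our setting while preserving the $\sqrt{N}$ rate, and inflating only polynomial factors in $\dw, B, T, S, A$, is what the appendix must handle carefully; beyond this step, once the OPE reduction is in place the proof reduces to bookkeeping that matches our parameters to the abstract bound of~\cite{chatterji2021theory}.
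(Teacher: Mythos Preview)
Your reduction to the OPE framework of~\cite{chatterji2021theory} is exactly the paper's approach, and your option~(i) is precisely what the paper does: since $\F(d)=\w_*^\top\phi(d)$ by Assumption~\ref{ass:linearly_realizable}, one simply passes the observed $\F(d)$ through the sigmoid and samples $y\sim\mathrm{Bernoulli}\big(\sigma(\F(d))\big)$, which places the problem \emph{verbatim} in the OPE logistic-feedback model and allows Theorem~3.2 of~\cite{chatterji2021theory} to be invoked as a black box up to a constant from the changed per-episode range. The alternative route you sketch in (i) and (iii)---generalizing the self-normalized concentration and confidence-set construction to a real-valued or Lipschitz link---is unnecessary; the paper sidesteps the feedback-model mismatch entirely rather than adapting the analysis, so the ``main obstacle'' you flag dissolves once you commit to the Bernoulli randomization.
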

\end{tcolorbox}
The latter result states that the problem of learning an optimal policy in a unknown convex MDP is at least statistically efficient assuming linear realizability and the access to a planning oracle. Those are fairly strong assumptions, but principled approximate solvers may be designed to overcome the planning oracle assumption, whereas in several convex RL settings the function $\F$ is assumed to be known, and thus trivially realizable.

\section{Numerical Validation}
\label{sec:validation}
In this section, we evaluate the performance over the finite trials objective~\eqref{finite_trial_convex_rl_objective} achieved by a policy  $\pi^\dagger \in \argmax_{\pi \in \Pi} \zeta_n(\pi)$ maximizing the same finite trials objective \eqref{finite_trial_convex_rl_objective} against a policy $\pi^\star \in \argmax_{\pi \in \Pi} \zeta_\infty(\pi)$ maximizing the infinite trials objective \eqref{convex_rl_objective} instead.
The latter infinite trials $\pi^*$ can be obtained by solving a dual optimization on the convex MDP (see Sec. 6.2 in \cite{mutti2020intrinsically}),
\begin{equation*}
    \max_{\omega \in \Delta (\Sspace \times \Aspace)} \F (\omega), \qquad \text{subject to} \ \sum_{a \in \Aspace} \omega (s, a) = \sum_{s' \in \Sspace, a' \in \Aspace} P(s | s', a') \omega (s', a'), \ \forall s \in \Sspace,
\end{equation*}
To get the finite trials $\pi^\dagger$, we first recover the extended MDP as explained in Section~\ref{sec:extended_mdp}, and then we apply standard dynamic programming~\cite{bellman1957dynamic}.
In the experiments, we show that optimizing the infinite trials objective can lead to sub-optimal policies across a wide range of applications. In particular, we cover examples from pure exploration, risk-averse RL, and imitation learning. We carefully selected MDPs that are as simple as possible in order to stress the generality of our results. For the sake of clarity, we restrict the discussion to the single trial setting ($n=1$).
\begin{figure*}[t]
\centering
\subcaptionbox[Short Subcaption]{
     Pure exploration
    \label{pure_exploration_mdp}}
[
    0.33\textwidth 
]
{%
    \includegraphics[scale=0.85]{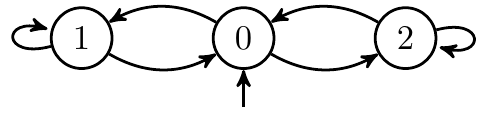}%
}%
\hfill 
\subcaptionbox[Short Subcaption]{%
    Risk-averse RL%
    \label{fig:risk_averse_mdp}%
}
[%
    0.33\textwidth 
]%
{%
    \includegraphics[scale=0.8]{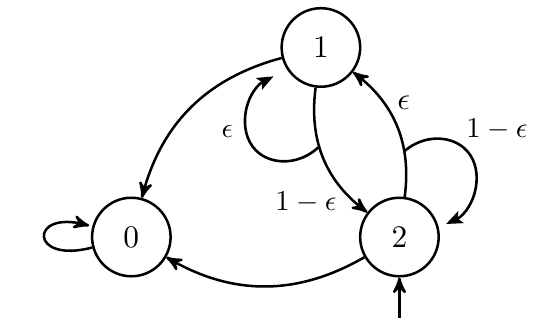}%
}%
\hfill
\subcaptionbox[Short Subcaption]{%
    Imitation learning
    \label{fig:imitation_learning_mdp}%
}
[%
    0.33\textwidth 
]%
{%
    \includegraphics[scale=0.85]{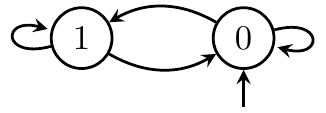}%
}%
\caption[Short Caption]{Visualization of the illustrative MDPs. In \textbf{(b)}, state $0$ is a low-reward ($r$) low-risk state, state $2$  is a high-reward ($R$) high-risk state, and state $1$ is a penalty state with zero reward.}
\vspace{-0.05cm}
\end{figure*}

\textbf{Pure Exploration}~~~
For the pure exploration setting, we consider the state entropy objective \cite{hazan2019maxent}, \ie
$
    \mathcal{F} (d) = H ( d ) = - d \cdot \log d,
$
and the convex MDP in Figure \ref{pure_exploration_mdp}. In this example, the agent aims to maximize the state entropy over finite trajectories of $T$ steps. Notice that this happens when a policy induces an empirical state distribution that is close to uniform.
In Figure \ref{subfig:avg_entropy}, we compare the average entropy induced by the optimal finite trials policy $\pi^\dagger$ and the optimal infinite trials policy $\pi^\star$.
An agent following the policy $\pi^\dagger$ always achieves a uniform empirical state distribution leading to the maximum entropy. Moreover, $\pi^\dagger$ is a non-Markovian deterministic policy. In contrast, the policy $\pi^*$ is randomized in all the three states. As a result, this policy induces sub-optimal empirical state distributions with \emph{strictly positive} probability, as shown in Figure \ref{subfig:hist_entropy}.

\textbf{Risk-Averse RL}~~~
For the risk-averse RL setting, we consider a Conditional Value-at-Risk (CVaR) objective~\cite{cvar} given by
$
    \mathcal{F} (d)  =  \operatorname{CVaR}_{\alpha} [ r \cdot d ],
$
where $r \in [0, 1]^{S}$ is a reward vector, and the convex MDP in Figure \ref{fig:risk_averse_mdp}, in which the agent aims to maximize the CVaR over a finite-length trajectory of $T$ steps.
First, notice that a financial semantics can be attributed to the given MDP. An agent, starting in state $2$, can decide whether to invest in risky assets, \eg crypto-currencies, or in safe ones, \eg treasury bills. Because of the stochastic transitions, a policy would need to be reactive to the realizations of the transition model in order to maximize the single trial objective~\eqref{single_trial_convex_rl_objective}. This kind of behavior is achieved by an optimal finite trials policy $\pi^\dagger$.  Indeed, $\pi^\dagger$ is a non-Markovian deterministic policy, which can take decisions as a function of history, and thus takes into account the current realization. On the other hand, an optimal infinite trials policy $\pi^*$ is a Markovian policy, and it cannot take into account the current history. As a result, the policy $\pi^*$ induces sub-optimal trajectories with \emph{strictly positive} probability (see Figure \ref{subfig:hist_cvar}).  
Finally, in Figure \ref{subfig:avg_cvar} we compare the single trial performance induced by the optimal single trial policy $\pi^\dagger$ and the optimal infinite trials policy $\pi^\star$. Overall, $\pi^\dagger$ performs significantly better than $\pi^\star$.

\textbf{Imitation Learning}~~~
For the imitation learning setting, we consider the distribution matching objective~\cite{kostrikov2019imitation}, \ie 
$\mathcal{F}(d)  = \operatorname{KL} \left(  d || d_{E}  \right),$
and the convex MDP in Figure~\ref{fig:imitation_learning_mdp}. The agent aims to learn a policy $\pi$ inducing an empirical state distribution $d$ close to the empirical state distribution $d_E$ demonstrated by an expert. 
In Figure \ref{subfig:avg_kl}, we compare single trial performance induced by the optimal single trial policy $\pi^\dagger$ and the optimal infinite trials policy $\pi^\star$.
An agent following $\pi^\dagger$ induces an empirical state distribution that perfectly matches the expert. In contrast, an agent following $\pi^*$ induces sub-optimal realizations with \emph{strictly positive} probability (see Figure \ref{subfig:hist_kl}).

\begin{figure*}[t]
\centering
\subcaptionbox[Short Subcaption]{
     Entropy average
    \label{subfig:avg_entropy}}
[
    0.33\textwidth 
]
{%
    \includegraphics[width=0.3\textwidth]{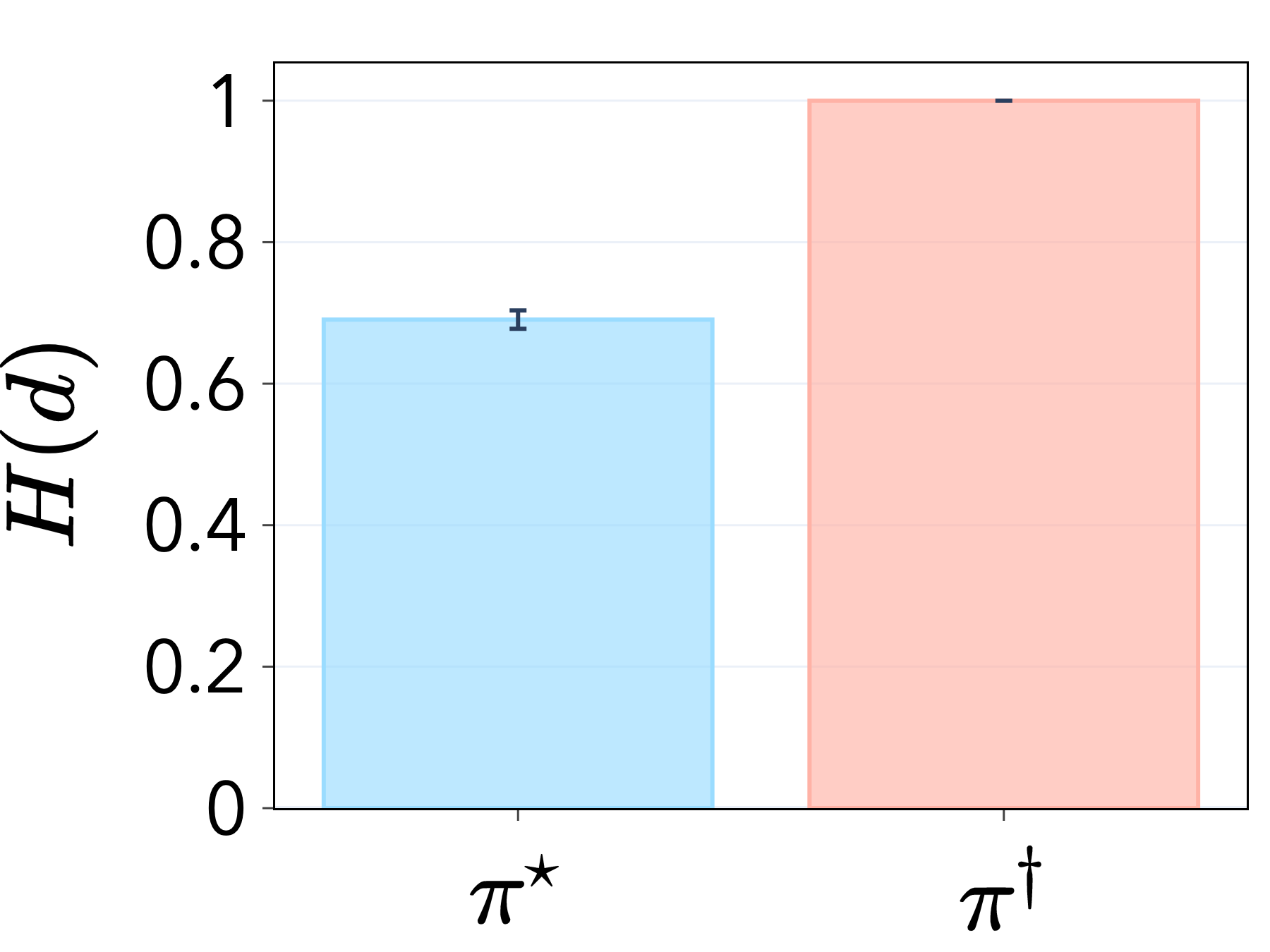}%
}%
\hfill 
\subcaptionbox[Short Subcaption]{%
    CVaR average%
    \label{subfig:avg_cvar}%
}
[%
    0.33\textwidth 
]%
{%
    \includegraphics[width=0.3\textwidth]{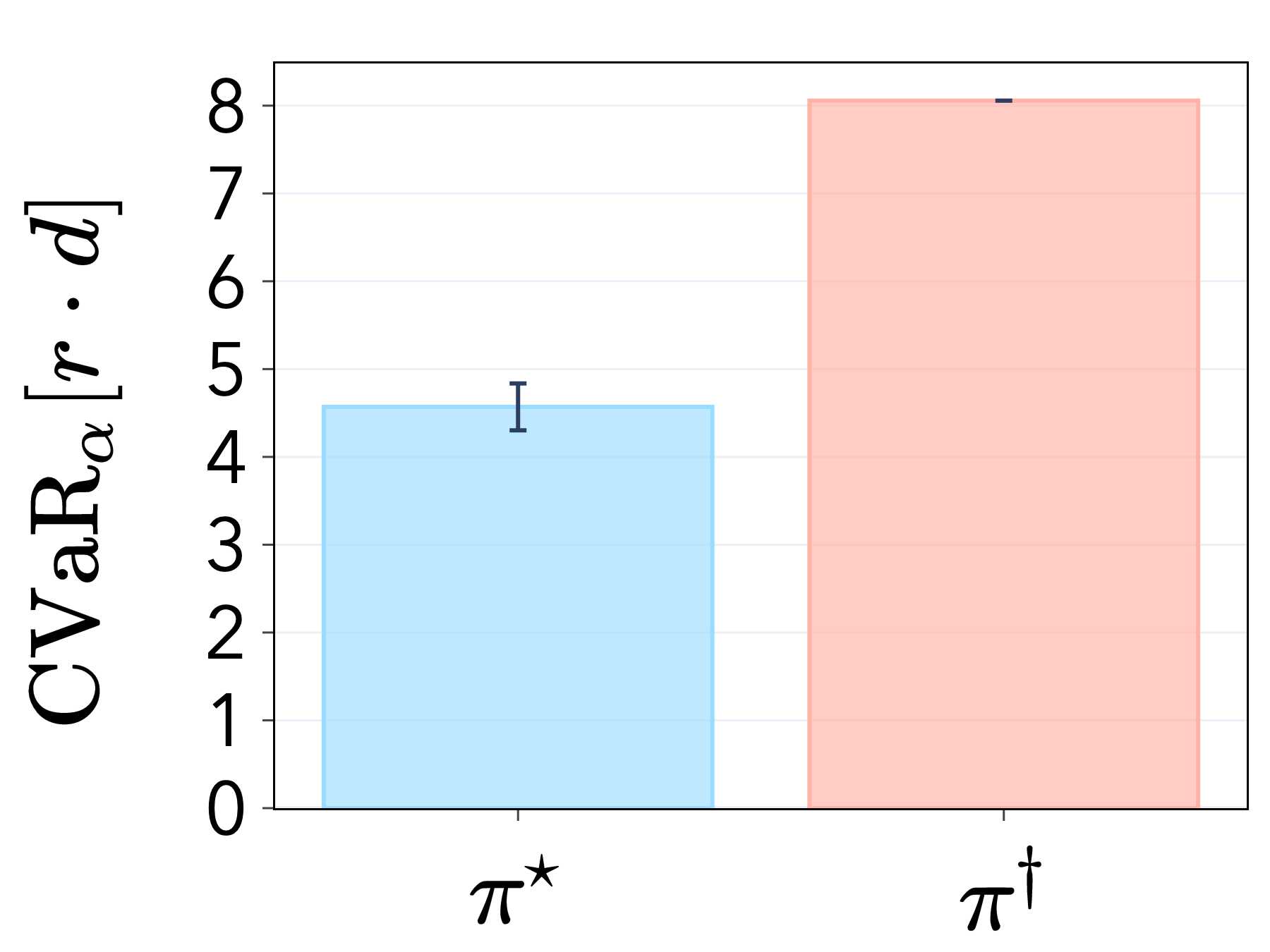}%
}%
\hfill
\subcaptionbox[Short Subcaption]{%
    KL average%
    \label{subfig:avg_kl}%
}
[%
    0.33\textwidth 
]%
{%
    \includegraphics[width=0.3\textwidth]{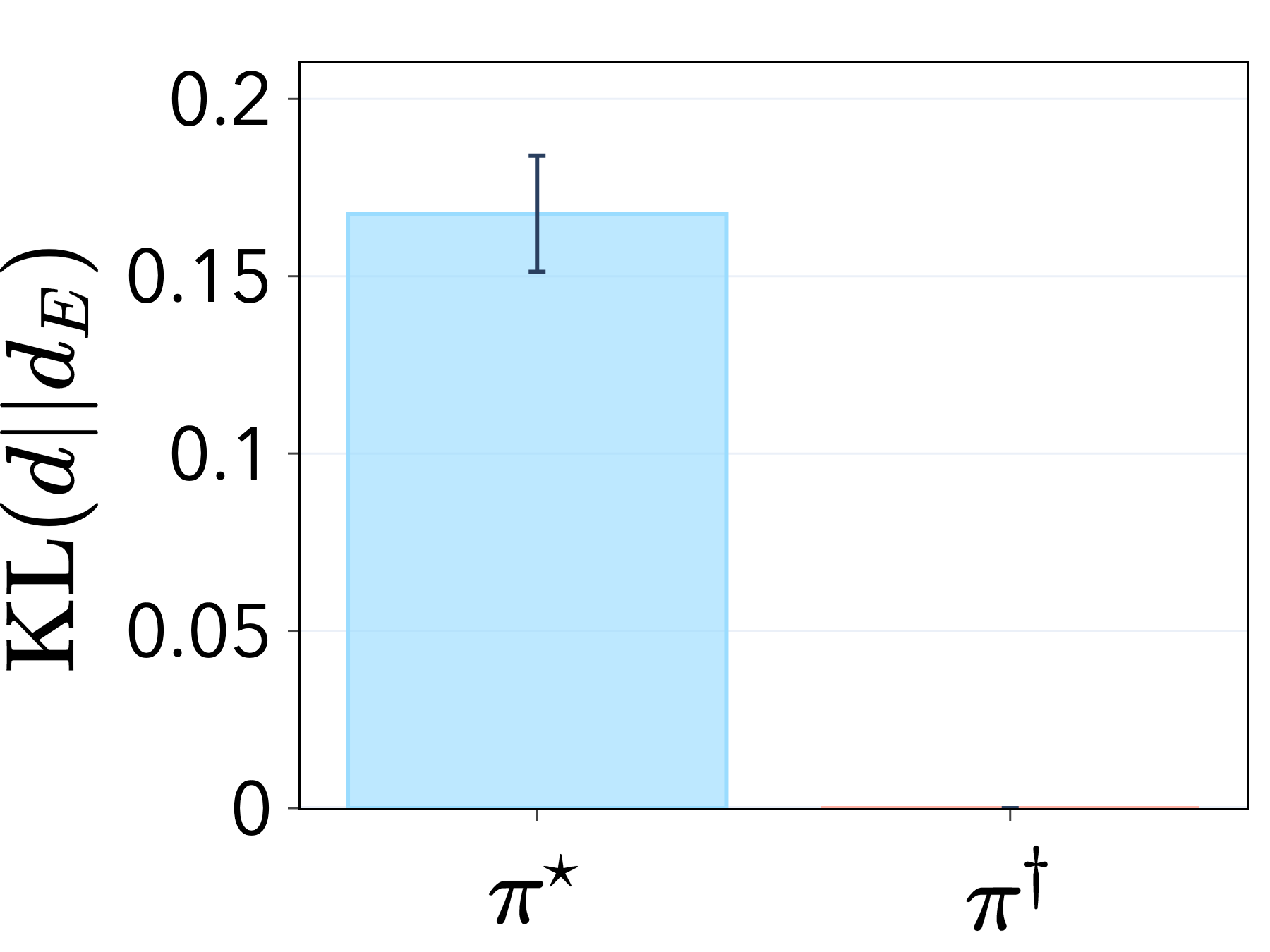}%
}%
\hfill
\subcaptionbox[Short Subcaption]{%
   Entropy distribution %
\label{subfig:hist_entropy}%
}
[%
    0.33\textwidth 
]%
{%
    \includegraphics[width=0.3\textwidth]{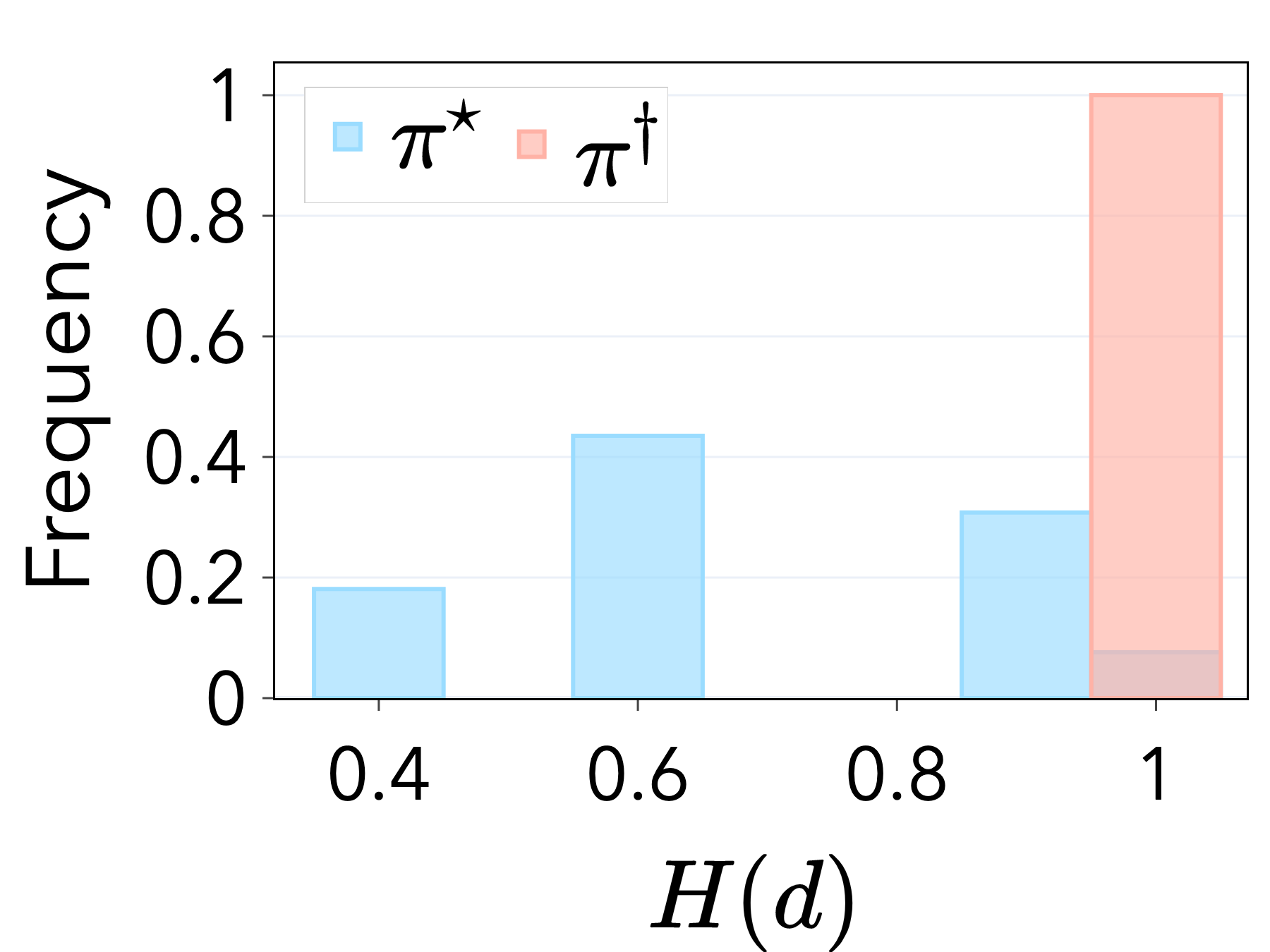}%
}%
\hfill
\subcaptionbox[Short Subcaption]{%
      CVaR distribution%
    \label{subfig:hist_cvar}%
}
[%
    0.33\textwidth 
]%
{%
    \includegraphics[width=0.3\textwidth]{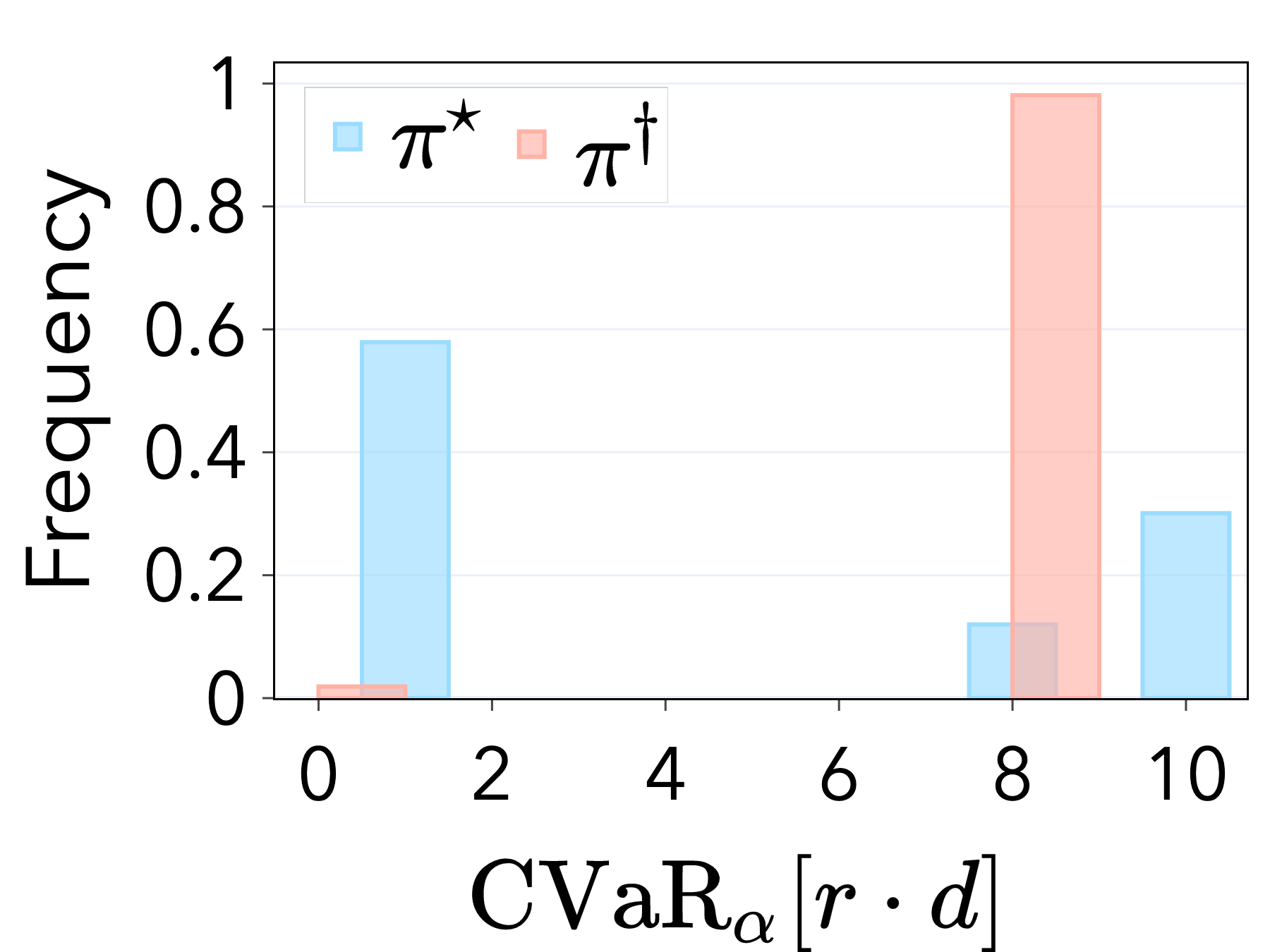}%
}%
\hfill
\subcaptionbox[Short Subcaption]{%
      KL distribution%
    \label{subfig:hist_kl}%
}
[%
    0.33\textwidth 
]%
{%
    \includegraphics[width=0.3\textwidth]{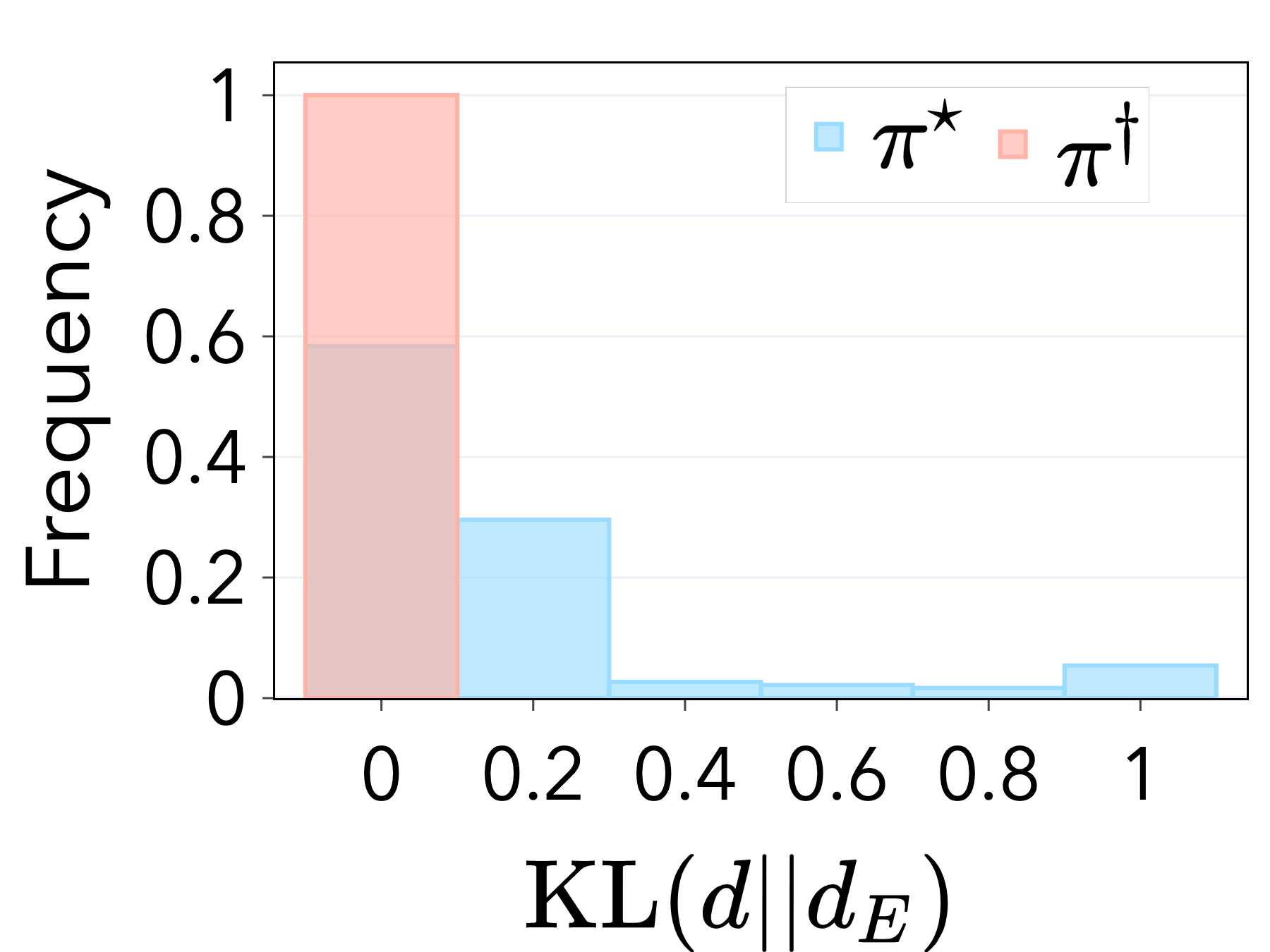}%
}%
\caption[Short Caption]{$\pi^\dagger$ denotes an optimal single trial policy, $\pi^\star$ denotes an optimal infinite trials policy. In \textbf{(a, d)} we report the average and the empirical distribution of the single trial utility $H(d)$ achieved in the pure exploration convex MDP ($T = 6$) of Figure~\ref{pure_exploration_mdp}. In \textbf{(b, e)} we report the average and the empirical distribution of the single trial utility $\operatorname{CVaR}_\alpha [r \cdot d]$ (with $\alpha = 0.4$) achieved in the risk-averse convex MDP ($T = 5$) of Figure~\ref{fig:risk_averse_mdp}. In \textbf{(c, f)} we report the average and the empirical distribution of the single trial utility $\operatorname{KL} (d  || d_E)$ (with expert distribution $d_E = (1/3,2/3)$) achieved in the imitation learning convex MDP ($T = 12$) of Figure~\ref{fig:imitation_learning_mdp}. For all the results, we provide 95 \% c.i. over 1000 runs. }
\label{fig:label}
\end{figure*}

\section{Related Work}
\label{sec:related_work}
To the best of our knowledge, \cite{hazan2019maxent} were the first to introduce the convex RL problem, as a generalization of the standard RL formulation to non-linear utilities, especially the entropy of the state distribution. They show that the convex RL objective, while being concave (convex) in the state distribution, can be non-concave (non-convex) in the policy parameters. Anyway, they provide a provably efficient algorithm that overcomes the non-convexity through a Frank-Wolfe approach.
\cite{zhang2020variational} study the convex RL problem under the name of RL with general utilities. Especially, they investigated a hidden convexity of the convex RL objective that allows for statistically efficient policy optimization in the infinite-trials setting. Recently, the infinite trials convex RL formulation has been reinterpreted from game-theoretic perspectives~\cite{zahavy2021reward, geist2021concave}. The former~\cite{zahavy2021reward} notes that the convex RL problem can be seen as a min-max game between the policy player and a cost player. The latter~\cite{geist2021concave} shows that the convex RL problem is a subclass of mean-field games.

Another relevant branch of literature is the one investigating the expressivity of scalar (Markovian) rewards~\cite{abel2021expressivity, silver2021reward}. Especially, \cite{abel2021expressivity} shows that not all the notions of task, such as inducing a set of admissible policies, a (partial) policy ordering, a trajectory ordering, can be naturally encoded with a scalar reward function. Whereas the convex RL formulation extends the expressivity of scalar RL \wrt all these three notions of task, it is still not sufficient to cover any instance. Convex RL is powerful in terms of the policy ordering it can induce, but it is inherently limited on the trajectory ordering as it only accounts for the infinite trials state distribution. Instead, the finite trials convex RL setting that we presented in this paper is naturally expressive in terms of trajectory orderings, at the expense of a diminished expressivity on the policy orderings \wrt infinite trials convex RL.

Previous works concerning RL in the presence of trajectory feedback are also related to this work. Most of this literature assumes an underlying scalar reward model~\citep[\eg][]{efroni2021reinforcement} which only delays the feedback at the end of the episode. One notable exception is the once-per-episode formulation in~\cite{chatterji2021theory}, which we have already commented on in Section~\ref{sec:single_trial}.

Finally, the work in~\cite{cheung2019exploration, cheung2019regret} considers infinite-horizon MDPs with vectorial rewards as a mean to encode convex objectives in RL with a multi-objective flavor. They show that stationary policies are in general sub-optimal for the introduced online learning setting, where non-stationarity becomes essential. In this setting, they provide principled procedures to learn an optimal policy with sub-linear regret. Their work essentially complement our analysis in the infinite-horizon problem formulation, where the difference between finite trials and infinite trials fades away.

\section{Conclusion and Future Directions}
While in classic RL the optimization of an infinite trials objective leads to the optimal policy for the finite trials counterpart, we have shown that true convex RL does not have this property. First, we have formalized the concept of finite trials convex RL, which captures a problem that until now has been cast into an unfounded optimization problem. Then, we have given an upper bound on the approximation error obtained by  erroneously optimizing the infinite trials objective, as it is currently done in practice. Finally, we have presented intuitive, yet general, experimental examples to show that the approximation error can be significant in relevant applications.
Since the finite trials setting is the standard in both simulated and real-world RL, we believe that shedding light on the above mentioned performance gap will lead to better approaches for convex RL and related areas. Future work could target approximate solutions to the finite trials objective rather than the infinite trials one, which can cause sub-optimality even when solved exactly. Methods in POMDPs~\cite{POMDPplanning2000, littman1995362} or optimistic planning algorithms~\cite{olop2010} could provide useful inspiration. 

\section*{Acknowledgements}

Riccardo De Santi and Piersilvio De Bartolomeis thank professor Niao He for offering graduate students at ETH the opportunity to work in exciting research areas within the ``Foundations of Reinforcement Learning'' course. Further, we thank Ali Batuhan Yardim for his generous feedback on an early version of this work.

\bibliography{biblio}

\begin{thebibliography}{10}

\bibitem{abbeel2004apprenticeship}
Pieter Abbeel and Andrew~Y Ng.
\newblock Apprenticeship learning via inverse reinforcement learning.
\newblock In {\em International Conference on Machine learning}, 2004.

\bibitem{abel2021expressivity}
David Abel, Will Dabney, Anna Harutyunyan, Mark~K Ho, Michael Littman, Doina
  Precup, and Satinder Singh.
\newblock On the expressivity of {M}arkov reward.
\newblock In {\em Advances in Neural Information Processing Systems}, 2021.

\bibitem{achiam2017constrained}
Joshua Achiam, David Held, Aviv Tamar, and Pieter Abbeel.
\newblock Constrained policy optimization.
\newblock In {\em International Conference on Machine Learning}, 2017.

\bibitem{altman1999constrained}
Eitan Altman.
\newblock {\em Constrained Markov decision processes}, volume~7.
\newblock CRC Press, 1999.

\bibitem{aastrom1965optimal}
Karl~Johan {\AA}str{\"o}m.
\newblock Optimal control of {M}arkov processes with incomplete state
  information.
\newblock {\em Journal of Mathematical Analysis and Applications}, 10:174--205,
  1965.

\bibitem{bai2022achieving}
Qinbo Bai, Amrit~Singh Bedi, Mridul Agarwal, Alec Koppel, and Vaneet Aggarwal.
\newblock Achieving zero constraint violation for constrained reinforcement
  learning via primal-dual approach.
\newblock In {\em AAAI Conference on Artificial Intelligence}, 2022.

\bibitem{bellman1957dynamic}
Richard Bellman.
\newblock Dynamic programming.
\newblock {\em Princeton University Press}, 1957.

\bibitem{bisi2020risk}
L~Bisi, L~Sabbioni, E~Vittori, M~Papini, and M~Restelli.
\newblock Risk-averse trust region optimization for reward-volatility
  reduction.
\newblock In {\em International Joint Conference on Artificial Intelligence},
  2020.

\bibitem{brantley2020constrained}
Kiant{\'e} Brantley, Miro Dudik, Thodoris Lykouris, Sobhan Miryoosefi, Max
  Simchowitz, Aleksandrs Slivkins, and Wen Sun.
\newblock Constrained episodic reinforcement learning in concave-convex and
  knapsack settings.
\newblock In {\em Advances in Neural Information Processing Systems}, 2020.

\bibitem{olop2010}
S{\'e}bastien Bubeck and R{\'e}mi Munos.
\newblock Open loop optimistic planning.
\newblock In {\em Conference on Learning Theory}, 2010.

\bibitem{campos2020explore}
V{\'\i}ctor Campos, Alexander Trott, Caiming Xiong, Richard Socher, Xavier
  Gir{\'o}-i Nieto, and Jordi Torres.
\newblock Explore, discover and learn: Unsupervised discovery of state-covering
  skills.
\newblock In {\em International Conference on Machine Learning}, 2020.

\bibitem{chatterji2021theory}
Niladri Chatterji, Aldo Pacchiano, Peter Bartlett, and Michael Jordan.
\newblock On the theory of reinforcement learning with once-per-episode
  feedback.
\newblock In {\em Advances in Neural Information Processing Systems}, 2021.

\bibitem{cheung2019exploration}
Wang~Chi Cheung.
\newblock Exploration-exploitation trade-off in reinforcement learning on
  online markov decision processes with global concave rewards.
\newblock {\em arXiv preprint arXiv:1905.06466}, 2019.

\bibitem{cheung2019regret}
Wang~Chi Cheung.
\newblock Regret minimization for reinforcement learning with vectorial
  feedback and complex objectives.
\newblock In {\em Advances in Neural Information Processing Systems}, 2019.

\bibitem{chow2017risk}
Yinlam Chow, Mohammad Ghavamzadeh, Lucas Janson, and Marco Pavone.
\newblock Risk-constrained reinforcement learning with percentile risk
  criteria.
\newblock {\em Journal of Machine Learning Research}, 18(1):6070--6120, 2017.

\bibitem{chow2015risk}
Yinlam Chow, Aviv Tamar, Shie Mannor, and Marco Pavone.
\newblock Risk-sensitive and robust decision-making: a cvar optimization
  approach.
\newblock In {\em Advances in Neural Information Processing Systems}, 2015.

\bibitem{dadashi2020primal}
Robert Dadashi, Leonard Hussenot, Matthieu Geist, and Olivier Pietquin.
\newblock Primal wasserstein imitation learning.
\newblock In {\em International Conference on Learning Representations}, 2020.

\bibitem{efroni2021reinforcement}
Yonathan Efroni, Nadav Merlis, and Shie Mannor.
\newblock Reinforcement learning with trajectory feedback.
\newblock In {\em AAAI Conference on Artificial Intelligence}, 2021.

\bibitem{eysenbach2018diversity}
Benjamin Eysenbach, Abhishek Gupta, Julian Ibarz, and Sergey Levine.
\newblock Diversity is all you need: Learning skills without a reward function.
\newblock In {\em International Conference on Learning Representations}, 2018.

\bibitem{garcia2015comprehensive}
Javier Garc{\i}a and Fernando Fern{\'a}ndez.
\newblock A comprehensive survey on safe reinforcement learning.
\newblock {\em Journal of Machine Learning Research}, 16(1):1437--1480, 2015.

\bibitem{geist2021concave}
Matthieu Geist, Julien P{\'e}rolat, Mathieu Lauri{\`e}re, Romuald Elie, Sarah
  Perrin, Oliver Bachem, R{\'e}mi Munos, and Olivier Pietquin.
\newblock Concave utility reinforcement learning: The mean-field game
  viewpoint.
\newblock In {\em International Conference on Autonomous Agents and Multiagent
  Systems}, 2022.

\bibitem{ghasemipour2020divergence}
Seyed Kamyar~Seyed Ghasemipour, Richard Zemel, and Shixiang Gu.
\newblock A divergence minimization perspective on imitation learning methods.
\newblock In {\em Conference on Robot Learning}, 2020.

\bibitem{skill_discovery}
Karol Gregor, Danilo Rezende, and Daan Wierstra.
\newblock Variational intrinsic control.
\newblock {\em International Conference on Learning Representations, Workshop
  Track}, 2017.

\bibitem{guo2021geometric}
Zhaohan~Daniel Guo, Mohammad~Gheshlagi Azar, Alaa Saade, Shantanu Thakoor,
  Bilal Piot, Bernardo~Avila Pires, Michal Valko, Thomas Mesnard, Tor
  Lattimore, and R{\'e}mi Munos.
\newblock Geometric entropic exploration.
\newblock {\em arXiv preprint arXiv:2101.02055}, 2021.

\bibitem{hansen2019fast}
Steven Hansen, Will Dabney, Andre Barreto, David Warde-Farley, Tom Van~de
  Wiele, and Volodymyr Mnih.
\newblock Fast task inference with variational intrinsic successor features.
\newblock In {\em International Conference on Learning Representations}, 2019.

\bibitem{POMDPplanning2000}
Milos Hauskrecht.
\newblock Value-function approximations for partially observable markov
  decision processes.
\newblock {\em Journal of Artificial Intelligence Research}, 13:33--94, 2000.

\bibitem{hazan2019maxent}
Elad Hazan, Sham Kakade, Karan Singh, and Abby Van~Soest.
\newblock Provably efficient maximum entropy exploration.
\newblock In {\em International Conference on Machine Learning}, 2019.

\bibitem{he2022wasserstein}
Shuncheng He, Yuhang Jiang, Hongchang Zhang, Jianzhun Shao, and Xiangyang Ji.
\newblock Wasserstein unsupervised reinforcement learning.
\newblock In {\em AAAI Conference on Artificial Intelligence}, 2022.

\bibitem{ho2016generative}
Jonathan Ho and Stefano Ermon.
\newblock Generative adversarial imitation learning.
\newblock In {\em Advances in Neural Information Processing Systems}, 2016.

\bibitem{hussein2017imitation}
Ahmed Hussein, Mohamed~Medhat Gaber, Eyad Elyan, and Chrisina Jayne.
\newblock Imitation learning: A survey of learning methods.
\newblock {\em ACM Computing Surveys (CSUR)}, 50(2):1--35, 2017.

\bibitem{kaelbling1998planning}
Leslie~Pack Kaelbling, Michael~L Littman, and Anthony~R Cassandra.
\newblock Planning and acting in partially observable stochastic domains.
\newblock {\em Artificial Intelligence}, 1998.

\bibitem{kostrikov2019imitation}
Ilya Kostrikov, Ofir Nachum, and Jonathan Tompson.
\newblock Imitation learning via off-policy distribution matching.
\newblock In {\em International Conference on Learning Representations}, 2019.

\bibitem{state_marginal}
Lisa Lee, Benjamin Eysenbach, Emilio Parisotto, Eric Xing, Sergey Levine, and
  Ruslan Salakhutdinov.
\newblock Efficient exploration via state marginal matching.
\newblock {\em arXiv preprint arXiv:1906.05274}, 2019.

\bibitem{littman1995362}
Michael~L Littman, Anthony~R Cassandra, and Leslie~Pack Kaelbling.
\newblock Learning policies for partially observable environments: Scaling up.
\newblock In {\em Machine Learning}, pages 362--370. Elsevier, 1995.

\bibitem{liu2021aps}
Hao Liu and Pieter Abbeel.
\newblock {APS}: Active pretraining with successor features.
\newblock In {\em International Conference on Machine Learning}, 2021.

\bibitem{liu2021behavior}
Hao Liu and Pieter Abbeel.
\newblock Behavior from the void: Unsupervised active pre-training.
\newblock In {\em Advances in Neural Information Processing Systems}, 2021.

\bibitem{constrained_mdp}
Sobhan Miryoosefi, Kiant{\'e} Brantley, Hal Daume~III, Miro Dudik, and Robert~E
  Schapire.
\newblock Reinforcement learning with convex constraints.
\newblock In {\em Advances in Neural Information Processing Systems}, 2019.

\bibitem{mutti2022importance}
Mirco Mutti, Riccardo De~Santi, and Marcello Restelli.
\newblock The importance of non-markovianity in maximum state entropy
  exploration.
\newblock In {\em International Conference on Machine Learning}, 2022.

\bibitem{mutti2022unsupervised}
Mirco Mutti, Mattia Mancassola, and Marcello Restelli.
\newblock Unsupervised reinforcement learning in multiple environments.
\newblock In {\em AAAI Conference on Artificial Intelligence}, 2022.

\bibitem{mutti2021policy}
Mirco Mutti, Lorenzo Pratissoli, and Marcello Restelli.
\newblock Task-agnostic exploration via policy gradient of a non-parametric
  state entropy estimate.
\newblock In {\em AAAI Conference on Artificial Intelligence}, 2021.

\bibitem{mutti2020intrinsically}
Mirco Mutti and Marcello Restelli.
\newblock An intrinsically-motivated approach for learning highly exploring and
  fast mixing policies.
\newblock In {\em AAAI Conference on Artificial Intelligence}, 2020.

\bibitem{osa2018algorithmic}
Takayuki Osa, Joni Pajarinen, Gerhard Neumann, J~Andrew Bagnell, Pieter Abbeel,
  Jan Peters, et~al.
\newblock An algorithmic perspective on imitation learning.
\newblock {\em Foundations and Trends{\textregistered} in Robotics},
  7(1-2):1--179, 2018.

\bibitem{prashanth2013variance}
LA~Prashanth and Mohammad Ghavamzadeh.
\newblock Actor-critic algorithms for risk-sensitive mdps.
\newblock In {\em Advances in Neural Information Processing Systems}, 2013.

\bibitem{puterman2014markov}
Martin~L Puterman.
\newblock {\em Markov decision processes: discrete stochastic dynamic
  programming}.
\newblock John Wiley \& Sons, 2014.

\bibitem{qin2021density}
Zengyi Qin, Yuxiao Chen, and Chuchu Fan.
\newblock Density constrained reinforcement learning.
\newblock In {\em International Conference on Machine Learning}, 2021.

\bibitem{cvar}
R~Tyrrell Rockafellar and Stanislav Uryasev.
\newblock Optimization of conditional value-at-risk.
\newblock {\em Journal of Risk}, 2(3):21--41, 2000.

\bibitem{seo2021state}
Younggyo Seo, Lili Chen, Jinwoo Shin, Honglak Lee, Pieter Abbeel, and Kimin
  Lee.
\newblock State entropy maximization with random encoders for efficient
  exploration.
\newblock In {\em International Conference on Machine Learning}, 2021.

\bibitem{sharma2020dynamics}
Archit Sharma, Shixiang Gu, Sergey Levine, Vikash Kumar, and Karol Hausman.
\newblock Dynamics-aware unsupervised discovery of skills.
\newblock In {\em International Conference on Learning Representations}, 2020.

\bibitem{silver2021reward}
David Silver, Satinder Singh, Doina Precup, and Richard~S Sutton.
\newblock Reward is enough.
\newblock {\em Artificial Intelligence}, 299:103535, 2021.

\bibitem{sutton2018reinforcement}
Richard~S Sutton and Andrew~G Barto.
\newblock {\em Reinforcement learning: An introduction}.
\newblock MIT press, 2018.

\bibitem{tamar2015policy}
Aviv Tamar, Yinlam Chow, Mohammad Ghavamzadeh, and Shie Mannor.
\newblock Policy gradient for coherent risk measures.
\newblock In {\em Advances in Neural Information Processing Systems}, 2015.

\bibitem{tamar2013variance}
Aviv Tamar and Shie Mannor.
\newblock Variance adjusted actor critic algorithms.
\newblock {\em arXiv preprint arXiv:1310.3697}, 2013.

\bibitem{tarbouriech2019active}
Jean Tarbouriech and Alessandro Lazaric.
\newblock Active exploration in {M}arkov decision processes.
\newblock In {\em International Conference on Artificial Intelligence and
  Statistics}, 2019.

\bibitem{weissman2003inequalities}
Tsachy Weissman, Erik Ordentlich, Gadiel Seroussi, Sergio Verdu, and Marcelo~J
  Weinberger.
\newblock Inequalities for the l1 deviation of the empirical distribution.
\newblock {\em Hewlett-Packard Labs, Tech. Rep}, 2003.

\bibitem{yarats2021reinforcement}
Denis Yarats, Rob Fergus, Alessandro Lazaric, and Lerrel Pinto.
\newblock Reinforcement learning with prototypical representations.
\newblock In {\em International Conference on Machine Learning}, 2021.

\bibitem{yu2021provably}
Tiancheng Yu, Yi~Tian, Jingzhao Zhang, and Suvrit Sra.
\newblock Provably efficient algorithms for multi-objective competitive rl.
\newblock In {\em International Conference on Machine Learning}, 2021.

\bibitem{zahavy2021reward}
Tom Zahavy, Brendan O'Donoghue, Guillaume Desjardins, and Satinder Singh.
\newblock Reward is enough for convex mdps.
\newblock In {\em Advances in Neural Information Processing Systems}, 2021.

\bibitem{zahavy2022discovering}
Tom Zahavy, Yannick Schroecker, Feryal Behbahani, Kate Baumli, Sebastian
  Flennerhag, Shaobo Hou, and Satinder Singh.
\newblock Discovering policies with domino: Diversity optimization maintaining
  near optimality.
\newblock {\em arXiv preprint arXiv:2205.13521}, 2022.

\bibitem{zhang2021exploration}
Chuheng Zhang, Yuanying Cai, Longbo Huang, and Jian Li.
\newblock Exploration by maximizing {R}{\'e}nyi entropy for reward-free {RL}
  framework.
\newblock In {\em AAAI Conference on Artificial Intelligence}, 2021.

\bibitem{zhang2020variational}
Junyu Zhang, Alec Koppel, Amrit~Singh Bedi, Csaba Szepesvari, and Mengdi Wang.
\newblock Variational policy gradient method for reinforcement learning with
  general utilities.
\newblock In {\em Advances in Neural Information Processing Systems}, 2020.

\bibitem{zhang2021mean}
Shangtong Zhang, Bo~Liu, and Shimon Whiteson.
\newblock Mean-variance policy iteration for risk-averse reinforcement
  learning.
\newblock In {\em AAAI Conference on Artificial Intelligence}, 2021.

\end{thebibliography}
\bibliographystyle{plain}

\section*{Checklist}

\begin{enumerate}

\item For all authors...
\begin{enumerate}
  \item Do the main claims made in the abstract and introduction accurately reflect the paper's contributions and scope?
    \answerYes{Lines 57-73 details the main contributions and where to find them.}
  \item Did you describe the limitations of your work?
    \answerNo{We did not explicitly commented the limitations in the paper. However, we note that this is an analytical paper that does not include an algorithmic contribution. This could be seen as a potential limitation, but we believe that developing methodologies for finite trials convex RL settings can be a matter for future works.}
  \item Did you discuss any potential negative societal impacts of your work?
    \answerNo{This is a mainly theoretical paper, we cannot foresee any potential societal impact beyond speculation.}
  \item Have you read the ethics review guidelines and ensured that your paper conforms to them?
    \answerYes{}
\end{enumerate}

\item If you are including theoretical results...
\begin{enumerate}
  \item Did you state the full set of assumptions of all theoretical results?
    \answerYes{See Assumptions~\ref{ass:Lipschitz},~\ref{ass:linearly_realizable} and the Appendix for additional details.}
        \item Did you include complete proofs of all theoretical results?
    \answerYes{See the Appendix.}
\end{enumerate}

\item If you ran experiments...
\begin{enumerate}
  \item Did you include the code, data, and instructions needed to reproduce the main experimental results (either in the supplemental material or as a URL)?
    \answerNo{Our paper is mainly theoretical. While we reported a brief numerical evaluation, the experiments are straightforward to reproduce given the description provided in Section~\ref{sec:validation}.}
  \item Did you specify all the training details (e.g., data splits, hyperparameters, how they were chosen)?
    \answerYes{See Section~\ref{sec:validation}.}
        \item Did you report error bars (e.g., with respect to the random seed after running experiments multiple times)?
    \answerYes{We reported 95 \% c.i. over 1000 runs, as specified in the caption of Figure~\ref{fig:label}.}
        \item Did you include the total amount of compute and the type of resources used (e.g., type of GPUs, internal cluster, or cloud provider)?
    \answerNo{The needed computation is negligible.}
\end{enumerate}

\item If you are using existing assets (e.g., code, data, models) or curating/releasing new assets...
\begin{enumerate}
  \item If your work uses existing assets, did you cite the creators?
    \answerNA{}
  \item Did you mention the license of the assets?
    \answerNA{}
  \item Did you include any new assets either in the supplemental material or as a URL?
    \answerNA{}
  \item Did you discuss whether and how consent was obtained from people whose data you're using/curating?
    \answerNA{}
  \item Did you discuss whether the data you are using/curating contains personally identifiable information or offensive content?
    \answerNA{}
\end{enumerate}

\item If you used crowdsourcing or conducted research with human subjects...
\begin{enumerate}
  \item Did you include the full text of instructions given to participants and screenshots, if applicable?
    \answerNA{}
  \item Did you describe any potential participant risks, with links to Institutional Review Board (IRB) approvals, if applicable?
    \answerNA{}
  \item Did you include the estimated hourly wage paid to participants and the total amount spent on participant compensation?
    \answerNA{}
\end{enumerate}

\end{enumerate}

\clearpage
\onecolumn
\appendix
\section{Proofs}
\approximationError*
\begin{proof}
    Let us first upper bound the approximation error as
    \begin{align}
        err := \big| \zeta_n (\pi^\dagger) - \zeta_n (\pi^\star) \big|
        &\leq \big| \zeta_n (\pi^\dagger) - \zeta_\infty (\pi^\dagger) \big| + \big| \zeta_\infty (\pi^\dagger) - \zeta_n (\pi^\star) \big| \label{eq:0} \\
        &\leq \big| \zeta_n (\pi^\dagger) - \zeta_\infty (\pi^\dagger) \big| + \big| \zeta_\infty (\pi^\star) - \zeta_n (\pi^\star) \big| \label{eq:1} \\
        &\leq \Big| \EV_{d_n \sim p^{\pi^\dagger}_n} \left[ \F (d_n) \right] - \F (d^{\pi^\dagger}) \Big| + \Big| \EV_{d_n \sim p^{\pi^\star}_n} \left[ \F (d_n) \right] - \F (d^{\pi^\star}) \Big| \label{eq:2}  \\
        &\leq \EV_{d_n \sim p^{\pi^\dagger}_n} \left[ \left| \F (d_n) - \F (d^{\pi^\dagger}) \right| \right] + \EV_{d_n \sim p^{\pi^\star}_n} \left[ \left| \F (d_n) - \F (d^{\pi^\star}) \right| \right] \label{eq:3} \\
        &\leq \EV_{d_n \sim p^{\pi^\dagger}_n} \left[ L \left\| d_n - d^{\pi^\dagger} \right\|_1 \right] + \EV_{d_n \sim p^{\pi^\star}_n} \left[ L \left\| d_n - d^{\pi^\star} \right\|_1 \right] \label{eq:4} \\
        &\leq 2 L \max_{\pi \in \{ \pi^\dagger, \pi^\star \}} \EV_{d_n \sim p^{\pi}_n} \left[ \left\| d_n - d^{\pi} \right\|_1 \right] \label{eq:5} \\
        &\leq 2 L \max_{\pi \in \{ \pi^\dagger, \pi^\star \}} \EV_{d_n \sim p^{\pi}_n} \left[ \max_{t \in [T]}  \left\| d_{n, t} - d^{\pi}_t \right\|_1 \right], \label{eq:error_bound}
    \end{align}
    where~\eqref{eq:0} is obtained by adding $\pm \zeta_\infty (\pi^\dagger)$ and then applying the triangle inequality, \eqref{eq:1} follows by noting that $\zeta_{\infty} (\pi^\star) \geq \zeta_\infty (\pi^\dagger)$, we derive~\eqref{eq:2} by plugging the definitions of $\zeta_n, \zeta_\infty$ in~\eqref{eq:1}, then we obtain \eqref{eq:3} from $|\EV[X]| \leq \EV[|X|]$, we apply the Lipschitz assumption on $\F$ to write~\eqref{eq:4} from~\eqref{eq:3}, we maximize over the policies to write~\eqref{eq:5}, and we finally obtain~\eqref{eq:error_bound} through a maximization over the episode's step by noting that $d_{n} = \frac{1}{T} \sum_{t \in [T]} d_{n, t}$ and $d^\pi = \frac{1}{T} \sum_{t \in [T]} d^\pi_t$.
    Then, we seek to bound with a high probability
    \begin{align}
        Pr \Big( \max_{\pi \in \{ \pi^\dagger, \pi^\star \}} \max_{t \in [T]}  \left\| d_{n, t} - d^{\pi}_t \right\|_1 \geq \epsilon \Big)
        &\leq  Pr \Big( \bigcup_{\pi, t} \left\| d_{n, t} - d^{\pi}_t \right\|_1 \geq \epsilon \Big) \label{eq:6} \\
        &\leq \sum_{\pi, t} Pr \Big( \left\| d_{n, t} - d^{\pi}_t \right\|_1 \geq \epsilon \Big) \label{eq:7} \\
        &\leq 2 T \ Pr \Big( \left\| d_{n, t} - d^{\pi}_t \right\|_1 \geq \epsilon \Big), \label{eq:union_bound}
    \end{align}
    where $\epsilon > 0$ is a positive constant, and we applied a union bound to get~\eqref{eq:7} from~\eqref{eq:6}. From concentration inequalities for empirical distributions (see Theorem 2.1 in \cite{weissman2003inequalities} and Lemma 16 in \cite{efroni2021reinforcement}) we have
    \begin{equation}
        Pr \Bigg( \left\| d_{n, t} - d^{\pi}_t \right\|_1 \geq \sqrt{\frac{2S \log(2 / \delta')}{n}} \ \Bigg) \leq \delta'. \label{eq:empirical_dist_concentration}
    \end{equation}
    By setting $\delta' = \delta / 2T$ in \eqref{eq:empirical_dist_concentration}, and then plugging \eqref{eq:empirical_dist_concentration} in \eqref{eq:union_bound}, and again \eqref{eq:union_bound} in \eqref{eq:error_bound}, we have that with probability at least $1 - \delta$
    \begin{equation*}
        \big| \zeta_n (\pi^\dagger) - \zeta_n (\pi^\star) \big| \leq 4 L T \sqrt{\frac{2S \log(4 T / \delta )}{n}},
    \end{equation*}
    which concludes the proof.
\end{proof}

\begin{proposition}[Finite Trials vs Infinite Trials] We provide here some results on the objectives discussed in Table \ref{table:convexmdp}.
\begin{enumerate}[label=(\roman*)]
    \item Let $\mathcal{F}(d) = r \cdot d \ $ then $ \ \underset{\pi  \in \Pi}{\operatorname{min}} \; \zeta_\infty(\pi) = \underset{\pi  \in \Pi}{\operatorname{min}} \; \zeta_n (\pi), \; \forall{n \in \mathbb{N}}$
    \item Let $\mathcal{F}(d) = r \cdot d \ $ s.t. $ \ \lambda \cdot d \leq c \ $ then $ \ \underset{\pi  \in \Pi}{\operatorname{min}} \; \zeta_\infty(\pi) = \underset{\pi  \in \Pi}{\operatorname{min}} \; \zeta_n (\pi), \; \forall{n \in \mathbb{N}}$
    \item Let $\mathcal{F}(d) = \|d - d_E \|_2^2 \ $ then $ \ \underset{\pi  \in \Pi}{\operatorname{min}} \; \zeta_\infty(\pi) < \underset{\pi  \in \Pi}{\operatorname{min}} \; \zeta_n (\pi), \; \forall{n \in \mathbb{N}}$
    \item Let $\mathcal{F}(d) = - d \cdot \operatorname{log}(d) = H(d) \ $ then $\ \underset{\pi  \in \Pi}{\operatorname{min}} \; \zeta_\infty(\pi) < \underset{\pi  \in \Pi}{\operatorname{min}} \; \zeta_n (\pi), \; \forall{n \in \mathbb{N}}$
    \item Let $\mathcal{F}(d) = \operatorname{KL}(d || d_E) \ $ then $\ \underset{\pi  \in \Pi}{\operatorname{min}} \; \zeta_\infty(\pi) < \underset{\pi  \in \Pi}{\operatorname{min}} \; \zeta_n (\pi), \; \forall{n \in \mathbb{N}}$
\end{enumerate}
\label{prop:relationship_CRL_STCLR}
\end{proposition}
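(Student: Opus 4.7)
The plan is to split the five items into two regimes. Cases (i) and (ii) are linear in the state distribution (possibly with a linear constraint), and I would dispose of them by the same identity already used in Section~\ref{sec:rl}: linearity of expectation gives
\begin{equation*}
\zeta_n(\pi) \;=\; \EV_{d_n \sim p^\pi_n}\bigl[r\cdot d_n\bigr] \;=\; r\cdot \EV_{d_n \sim p^\pi_n}[d_n] \;=\; r\cdot d^\pi \;=\; \zeta_\infty(\pi)
\end{equation*}
pointwise in $\pi$, so the minima over $\Pi$ coincide. For (ii) the same calculation applied to the multiplier $\lambda$ shows $\EV_{d_n\sim p^\pi_n}[\lambda\cdot d_n]=\lambda\cdot d^\pi$, hence the finite- and infinite-trials feasible sets coincide, and the equality of minima transfers to the constrained problem.

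For the nonlinear cases (iii), (iv), (v) I would use Jensen's inequality as the common backbone and then invoke instance-specific small MDPs to upgrade a weak inequality to a strict one. Applying Jensen to the convex $\F$ in (iii) and (v) (respectively the concave $\F$ in (iv)) yields the pointwise bound $\zeta_\infty(\pi)\le \zeta_n(\pi)$ (respectively $\zeta_\infty(\pi)\ge \zeta_n(\pi)$) for every $\pi$, which immediately gives the corresponding weak ordering of the minima. The substantive step is strictness, and for it I would exhibit the same toy MDPs already used in Section~\ref{sec:validation}.

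Concretely, for the imitation-learning objectives in (iii) and (v) I would use the MDP of Figure~\ref{fig:imitation_learning_mdp}, where some policy $\pi^\star$ realises $d^{\pi^\star}=d_E$ and therefore attains $\zeta_\infty(\pi^\star)=0$, so $\min_\pi \zeta_\infty=0$. Any policy attaining $\zeta_n(\pi)=0$ would need $\|d_n-d_E\|_2^2=0$ (resp.\ $\operatorname{KL}(d_n\|d_E)=0$) almost surely under $p^\pi_n$, i.e.\ $d_n=d_E$ almost surely; a short combinatorial inspection of the reachable empirical distributions for this MDP rules this out for every $n\in\mathbb{N}$, giving $\min_\pi \zeta_n>0$. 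For the entropy objective in (iv) I would run the dual argument on the pure-exploration MDP of Figure~\ref{pure_exploration_mdp}: a policy that forces a single state yields an almost-surely degenerate $d_n$ and hence $\zeta_n=0$, while the reachable occupancy vectors $d^\pi$ all have strictly positive Shannon entropy, so the two minima separate for every $n$.

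The step I expect to be the main obstacle is the strictness verification in the three nonlinear cases, i.e.\ ensuring that no policy in $\Pi$ can drive the \emph{finite-trials} objective down to the infinite-trials optimum: for (iii) and (v) this requires ruling out exotic policies whose random empirical distribution happens to match $d_E$ almost surely, and for (iv) it requires ruling out policies whose random empirical distribution is almost surely a point mass yet whose expectation is non-degenerate. On the small MDPs of Section~\ref{sec:validation} both verifications reduce to enumerating the finitely many trajectories of length $T$ and reading off the induced $d_n$, so the obstacle is a finite (if slightly tedious) case analysis rather than a genuine analytical difficulty; the rest of the proof then follows from the Jensen ordering already established.
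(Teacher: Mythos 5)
Your treatment of (i) and (ii) is exactly the paper's: linearity of expectation gives $\zeta_n(\pi)=\zeta_\infty(\pi)$ pointwise (and the same identity applied to $\lambda$ shows the constraint set is unaffected), so the minima coincide. For (iii)--(v) you diverge from the paper in a substantive way. The paper's proof is a single line per item: it writes the chain $\min_\pi \F(\EV[d_n]) < \min_\pi \EV[\F(d_n)]$ and asserts the \emph{strict} Jensen inequality generically, for an arbitrary MDP, without verifying the non-degeneracy of $d_n$ that strictness requires. You instead establish only the weak Jensen ordering pointwise and propose to obtain strictness from witness MDPs taken from Section~\ref{sec:validation}. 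Your route is more honest about where strictness comes from (the paper's chain does collapse to an equality on degenerate instances, e.g.\ a deterministic MDP under a deterministic policy, where $d_n$ is almost surely constant), but it proves a different, existential statement --- ``there is an instance on which the two optima differ'' --- rather than the universally quantified inequality as literally written. That existential reading is what the \xmark{} entries of Table~\ref{table:convexmdp} actually need, so the substitution is defensible, but you should state explicitly that you are reinterpreting the claim.

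Two concrete problems remain in your nonlinear cases. First, your witness for (iv) is internally inconsistent as described: if a policy ``forces a single state'' so that $d_n$ is almost surely a point mass, then $d^\pi=\EV[d_n]$ is that same point mass and has zero entropy, contradicting your claim that every reachable $d^\pi$ has strictly positive entropy. The construction can be repaired by letting the degeneracy of each realization come from environment stochasticity (a random initial state followed by a self-loop, so each single-episode $d$ is a point mass on a random state while $d^\pi$ is spread out), but then $\EV[H(d_n)]=0$ only for $n=1$; for $n\ge 2$ the mixture of $n$ episodes is no longer degenerate and the separation must be re-argued for each $n$. Second, mind the sign bookkeeping in (iv): with the concave $\F=H$, Jensen gives $\zeta_n(\pi)\le\zeta_\infty(\pi)$ and hence $\min_\pi\zeta_n\le\min_\pi\zeta_\infty$, which is the \emph{reverse} of the displayed inequality; the paper's own derivation silently switches to the convex $d\cdot\log d$, and your witness likewise ends up separating the minima in the direction opposite to the one stated. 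Your (iii)/(v) argument --- $d_E$ is realizable as an expected occupancy but not as an almost-sure empirical occupancy --- is sound, provided you actually carry out the finite trajectory enumeration you defer.
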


\begin{proof}
We report below the corresponding derivations.
\begin{enumerate}[label=(\roman*)]
\item $\underset{\pi  \in \Pi}{\operatorname{min}} \; \zeta_\infty(\pi)= \underset{\pi  \in \Pi}{\operatorname{min}} \; r \cdot d^\pi =  \underset{\pi   \in \Pi}{\operatorname{min}} \; r \cdot \underset{d_n \sim p^\pi_n}{\mathbb{E}}[d_n]   = \underset{\pi  \in \Pi}{\operatorname{min}} \; \underset{d_n \sim p^\pi_n}{\mathbb{E}}[r \cdot  d_n]= \underset{\pi  \in \Pi}{\operatorname{min}} \; \zeta_n (\pi)$
\item  $\underset{\pi  \in \Pi}{\operatorname{min}} \; \zeta_\infty(\pi)= \underset{\pi  \in \Pi, \lambda \cdot d^\pi \leq c}{\operatorname{min}} \; r \cdot d^\pi = \underset{\pi  \in \Pi, \lambda \cdot d^\pi \leq c}{\operatorname{min}} \; r \cdot \underset{d_n \sim p^\pi_n}{\mathbb{E}}[d_n]   = \underset{\pi  \in \Pi, r \cdot d^\pi \leq c}{\operatorname{min}} \; \underset{d_n \sim p^\pi_n}{\mathbb{E}}[r\cdot  d_n]= \underset{\pi  \in \Pi}{\operatorname{min}}\;  \zeta_n (\pi)$
\item $\underset{\pi  \in \Pi}{\operatorname{min}} \; \zeta_\infty(\pi) = \underset{\pi \in \Pi}{\operatorname{min}} \; \|\underset{d_n \sim p^\pi_n}{\mathbb{E}}[d_n] - d_E \|^2_2 < \underset{\pi \in \Pi}{\operatorname{min}} \underset{d_n \sim p^\pi_n}{\mathbb{E}}[\|d_n - d_E\|^2_2] = \underset{\pi  \in \Pi}{\operatorname{min}} \; \zeta_n (\pi)$
\item $ \underset{\pi  \in \Pi}{\operatorname{min}} \; \zeta_\infty(\pi) = \underset{\pi \in \Pi}{\operatorname{min}} \; \underset{d_n \sim p^\pi_n}{\mathbb{E}}[d_n] \cdot \operatorname{log}\underset{d_n \sim p^\pi_n}{\mathbb{E}}[d_n] < \underset{\pi \in \Pi}{\operatorname{min}} \underset{d_n \sim p^\pi_n}{\mathbb{E}}[d_n \cdot \operatorname{log}d_n] = \underset{\pi  \in \Pi}{\operatorname{min}} \; \zeta_n (\pi)$
\item $\underset{\pi  \in \Pi}{\operatorname{min}} \; \zeta_\infty(\pi) = \underset{\pi \in \Pi}{\operatorname{min}} \; \operatorname{KL}(\underset{d_n \sim p^\pi_n}{\mathbb{E}}[d_n] || d_E)  < \underset{\pi \in \Pi}{\operatorname{min}} \underset{d_n \sim p^\pi_n}{\mathbb{E}}[\operatorname{KL}(d_n \; || \; d_E)] = \underset{\pi  \in \Pi}{\operatorname{min}}\; \zeta_n(\pi)$
\end{enumerate}
\end{proof}

\regretBound*
\begin{proof}
    To prove the result, we show that the described online learning setting can be translated into the once-per-episode framework~\cite{chatterji2021theory}. The main difference between the setting in~\cite{chatterji2021theory} and ours is that they assume a binary feedback $y \in \{ 0, 1\}$ coming from a logistic model
    \begin{align*}
        &y | d = 
        \begin{cases}
        1 & \text{with prob.} \quad \sigma (\w_*^\top \phi (d)) \\
        0 & \text{with prob.} \quad 1 - \sigma (\w_*^\top \phi (d)),
        \end{cases}
        & \sigma (x) = \frac{1}{1 + \exp (- x)}, \forall x \in \Reals,
    \end{align*}
    instead of our richer $\F (d)$. To transform the latter in the binary reward $y$, we note that $\F (d) = \w_*^\top \phi (d)$ through linear realizability (Assumption~\ref{ass:linearly_realizable}), then we filter $\F (d)$ through a logistic model to obtain $y = \sigma \big( \F (d) \big)$, which is then used as feedback for OPE-UCBVI. In this way, we can call Theorem 3.2 of~\cite{chatterji2021theory} to obtain the same regret rate up to a constant factor, which is caused by the different range of per-episode contributions in the regret. For detailed derivations and the complete regret upper bound see~\cite{chatterji2021theory}.
\end{proof}


\end{document}